\newtheorem{theorem}{Theorem}
\newtheorem*{theorem*}{Theorem}
\newtheorem{lemma}{Lemma}
\newtheorem{definition}{Definition}
\newcommand{\diag}{\mathsf{diag}}
\newcommand{\argmax}{\arg\max}
\newcommand{\argmin}{\arg\min}
\definecolor{yxc}{RGB}{255,0,0}
\definecolor{yjc}{RGB}{190,0,255}
\definecolor{whz}{RGB}{0,155,0}
\setlist[itemize]{leftmargin=*}
\setlist[enumerate]{leftmargin=*}
\def\x{{\mathbf{x}}}
\def\y{{\mathbf{y}}}
\def\g{{\mathbf{g}}}
\def\A{{\mathbf{A}}}
\def\w{{\mathbf{w}}}
\def\q{{\mathbf{q}}}
\def\y{{\mathbf{y}}}
\def\I{{\mathbf{I}}}
\def\H{{\mathbf{H}}}
\def\J{{\mathbf{J}}}
\def\wmin{{w_{\min}}}
\def\xse{{\x_\star^\epsilon}}
\def\wse{{\w_\star^\epsilon}}
\def\diag{\mathrm{diag}}
\newcommand{\reals}{\mathbb{R}}
\newcommand{\mC}{\mathcal{C}}
\newcommand{\brac}[1]{\left(#1\right)}
\newcommand{\enorm}[1]{\left\Vert#1\right\Vert}
\newcommand{\inner}[2]{\langle #1, #2 \rangle}
\def\muglobal{{\mu_{\mathrm{G}}}}
\def\mulocal{{\mu_{\mathrm{L}}}}
\def\mulocaleps{{\mu_{\mathrm{L}}^\epsilon}}
\def\CAMOO{\texttt{CAMOO}}
\def\PAMOO{\texttt{PAMOO}}
\title{Aligned Multi Objective Optimization}
\author[1]{Yonathan Efroni\textsuperscript{*}}
\author[2]{Ben Kretzu\textsuperscript{*}}
\author[1]{Daniel Jiang}
\author[1]{Jalaj Bhandari}
\author[1]{Zheqing (Bill) Zhu}
\author[1]{Karen Ullrich}
\affil[1]{Meta AI}
\affil[2]{Technion}
\begin{document}

\maketitle
\footnotetext[1]{\textsuperscript{*} Contributed equally.}

\begin{abstract}%
To date, the multi-objective optimization literature has mainly focused on conflicting objectives, studying the Pareto front, or requiring users to balance tradeoffs. Yet, in machine learning practice, there are many scenarios where such conflict does not take place. Recent findings from multi-task learning, reinforcement learning, and LLMs training show that diverse related tasks can enhance performance across objectives simultaneously. Despite this evidence, such phenomenon has not been examined from an optimization perspective. This leads to a lack of generic gradient-based methods that can scale to scenarios with a large number of related objectives.  To address this gap, we introduce the Aligned Multi-Objective Optimization framework, propose new algorithms for this setting, and provide theoretical guarantees of their superior performance compared to naive approaches. 
\end{abstract}

\section{Introduction}
In many real-world optimization problems, we have access to multi-dimensional feedback rather than a single scalar objective. The multi-objective optimization (MOO) literature has largely focused on the setting where these objectives \emph{conflict} with each other, which necessitates the \emph{Pareto dominance} notion of optimality. A closely related area of study is \emph{multi-task learning} \citep{caruana1997multitask,teh2017distral,sener2018multi,yu2020gradient,lin2021reasonable,liu2021conflict,navon2022multi,lin2023libmtl,achituve2024bayesian,he2024robust}, where multiple tasks are learned jointly, typically with both shared and task-specific parameters. The hope is that the model can perform better on individual task by sharing common information across tasks. Indeed, the phenomenon of improved performance across all tasks has been observed in several settings \citep{lin2023libmtl,lee2024parrot}, suggesting that perhaps there may not always be significant trade-offs between objectives. Similar observations appear in meta-learning \citep{ravi2017optimization,finn2017model,hospedales2021meta}, where the goal is to learn representations that enable quick adaptation to new tasks with minimal additional training, as well as in reinforcement learning~\cite{jaderberg2016reinforcement,teh2017distral, veeriah2019discovery,dann2023reinforcement}, where practitioners use multiple reward functions to better specify the policy or its representation.

In this work, we explicitly study a setting where objectives are \emph{aligned}, namely, that the different objectives share a common solution. This situation arises frequently in practice. For example, when using reinforcement learning (RL) to augment large language models (LLMs) with reasoning capabilities, there are often multiple options for the choice of reward model to use. \citet{lightman2023let} and \citet{uesato2022solving} consider both outcome and process-based rewards, and, recently \citet{guo2025deepseek,team2025kimi} discuss the use of accuracy, format rewards, length of response, and reward on math problems as additional reward functions . In training text-to-image models using RL, \citet{lee2024parrot} use four reward models (aesthetic quality, human preference, text-image alignment, and image sentiment) and show results where \emph{all} rewards are increased. Although the method of \citet{lee2024parrot} is designed for finding Pareto-optimal solutions (implying the existence of trade-offs), the numerical results suggest that the objectives may actually be \emph{aligned} to a good degree. 

These observations are also related to a more general phenomenon in RL discussed by \citet{dann2023reinforcement}, where learning can be accelerated by exploiting several alternative reward specifications that all lead to the same optimal policy. This concept builds on prior work showing that the choice of reward function (e.g., dense versus sparse reward) can have a dramatic effect on training time \citep{ng1999policy,luo2020accelerating,wang2019dynamic,hu2020learning}. A related idea in statistics is that when labeled data is sparse, practitioners can rely on closely-related proxy tasks to improve prediction accuracy \citep{bastani2021predicting}.

\begin{table}[t]
\centering
\renewcommand{\arraystretch}{1.25}  
\begin{tabular}{|>{\centering\arraybackslash}p{3.2cm}|>{\centering\arraybackslash}p{4.5cm}|}
\hline
\rowcolor{gray!50} \makecell{Algorithm} & \makecell{Asymptotic Convergence} \\
\hline
 \makecell{\CAMOO} & $O\brac{\brac{1-\muglobal/\beta}^{k}}$ \\
\hline
 \makecell{\PAMOO} &  $O\brac{\brac{1-\mulocal/\beta}^{k}}$  \\
\hline
\end{tabular}
\caption{The main results introduced in this work. $\beta$ is the smoothness parameter; $m$ is the number of objective functions; both $\muglobal$ and $\mulocal$ are structural quantities introduced in Section~\ref{sec:camoo} and Section~\ref{sec:pamoo}. These characterize notions of optimal curvature of weighted function and satisfy $\mulocal\geq \muglobal.$ 
}
\label{tab:my_table}
\end{table}


To our knowledge, there is no work that studies such a framework from an optimization perspective. We ask the following question: 
\begin{center}
    \emph{Can gradient descent type of algorithms benefit from multi-objective feedback when the objectives are aligned?}  
\end{center}
Previous work in multitask learning had provided convergence guarantees for gradient descent-type algorithms for MOO
\citep{sener2018multi,yu2020gradient,liu2021conflict,navon2022multi,he2024robust}. However, since these consider general multi-objective framework, their algorithms converge with worst-case guarantees with no meaningful convergence improvement of MOO.   

We provide a positive answer to the aforementioned question. We formally introduce the \emph{aligned multi-objective optimization} (AMOO) framework. Subsequently, we design new gradient descent-type algorithms and establish their provable improved convergence in the AMOO setting. These can be interpreted as parameter-free algorithms to handle multi-objective feedback when objectives are aligned. Lastly, we conclude by providing empirical evidence of the improved convergence properties of the new algorithms.

\section{Related Work}
\subsection{Gradient Weights in Multi-task \& Meta Learning}
Our work is closely related to optimization methods from the multi-task learning (MTL) and meta learning literature, particularly those that integrate weights into the task gradients or losses. The \emph{multiple gradient descent algorithm} (MGDA) approach of \citet{desideri2012multiple, sener2018multi,zhang2024convergence} is one of the first works along this direction. It proposes an optimization objective that gives rise to a weight vector that implies a descent direction for all tasks and converges to a point on the Pareto set. 
The PCGrad paper \cite{yu2020gradient} identified that conflicting gradients can be detrimental to MTL. The authors then propose to modify the gradients to remove this conflict (by projecting each task's gradient to the normal plane of another task), forming the basis for the PCGrad algorithm. Another work that tackles conflicting gradients is the \emph{conflict-averse gradient descent} (CAGrad) method of \cite{liu2021conflict}. CAGrad generalizes MGDA: its main idea is to minimize a notion of ``conflict'' between gradients from different tasks, while staying nearby the gradient of the average loss. Notably, CAGrad maintains convergence toward a minimum of the average loss. Another way to handle gradient conflicts is the Nash-MTL method of \citet{navon2022multi}, where the gradients are combined using a bargaining game. Very recently, \citet{achituve2024bayesian} introduced a Bayesian approach for gradient aggregation by incorporating uncertainty in gradient dimensions. Other optimization techniques for MTL include tuning gradient magnitudes so that all tasks train at a similar rate \citep{chen2018gradnorm}, taking the geometric mean of task losses \citep{chennupati2019multinet++}, and random weighting \citep{lin2021reasonable}. On the meta learning front, the MAML algorithm \citep{finn2019online} aims to learn a useful representation such that the model can adapt to new tasks with only a small number of training samples. Since fast adaptation is the primary goal in meta learning, MAML's loss calculation differs from those found in MTL. 

Few prior works provided provable convergence guarantees of the different existing multi-objective optimization methods. Without additional assumption on the alignment of different objectives, these guarantees quantified convergence to a point on the Pareto front. Unlike our work, there the convergence guarantees depended on worst-case structural quantities such as the maximal Lipschitz constant among all objectives~\cite{liu2021conflict,navon2022multi} or the maximal generalized smoothness~\cite{zhang2024convergence}.

The algorithms introduced in the following are similar to existing ones in that they construct a weighted loss to combine information from different sources of feedback. Unlike previous work, we focus on exploiting the prior knowledge that the objectives are \emph{aligned}.  We introduce new instance-dependent structural quantities that reflect how aligned multi-objective feedback can improve GD performance, improving convergence that depends on worst-case structural quantities, as in prior works. 

\begin{figure*}[t]
    \centering
    \includegraphics[width=\textwidth]{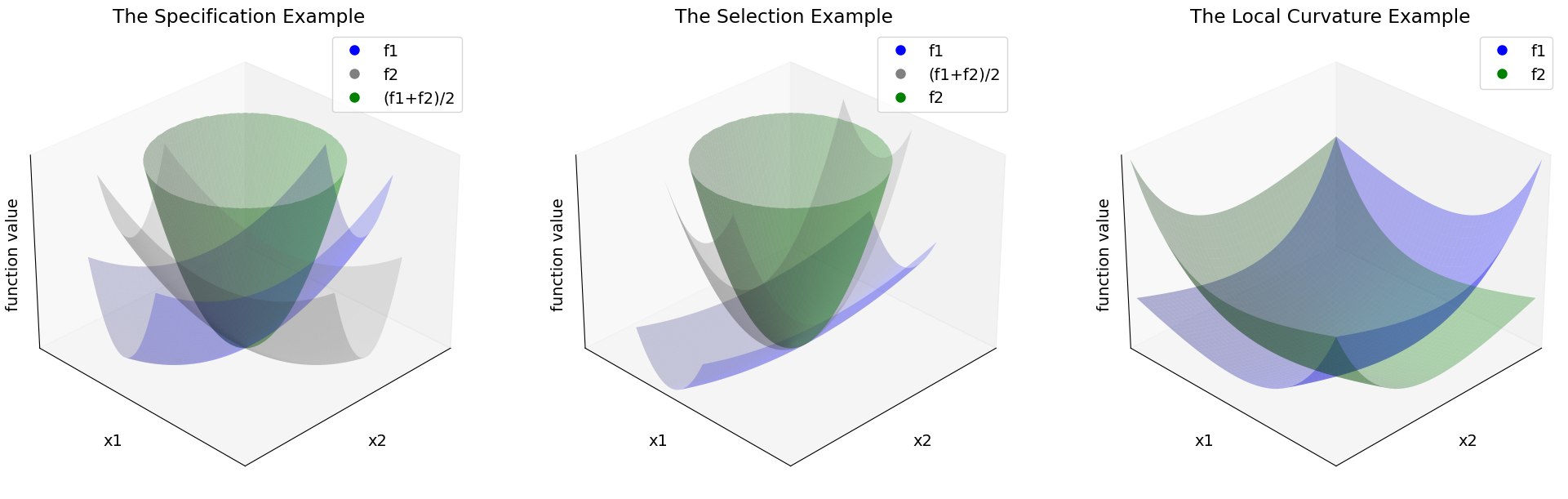}
    \caption{ Visualization of AMOO instances in which it is possible to obtain improved convergence compared to optimizing individual functions or the average function: \textbf{(left)} the specification example, \textbf{(center)} simpler instance of the selection example, and \textbf{(right)} 3D example of the local curvature example, in which $f_1(x_1,x_2)=\exp(x_1)+\exp(x_2)-x_1-x_2$ and $f_2(x_1,x_2)=f_1(-x_1,-x_2)$. This example highlights the need to toggle between functions according to their local curvature.}
    \label{fig:amoo_examples}
\end{figure*}

\subsection{Proxy \& Multi-fidelity Feedback}
Other streams of related work are (1) machine learning using proxies and (2) multi-fidelity optimization. These works stand out from MTL in that they both focus on using \emph{closely related} objectives, while traditional MTL typically considers a set of tasks that are more varied in nature. Proxy-based machine learning attempts to approximate the solution of a primary ``gold'' task (for which data is expensive or sparsely available) by making use of a proxy task where data is more abundant \citep{bastani2021predicting,dzyabura2019accounting}. Similarly, multi-fidelity optimization makes use of data sources of varying levels of accuracy (and potentially lower computational cost) to optimize a target objective \citep{forrester2007multi}. In particular, the idea of using multiple closely-related tasks of varying levels of fidelity has seen adoption in settings where function evaluations are expensive, including bandits \citep{kandasamy2016multi,kandasamy2016gaussian}, Bayesian optimization \citep{kandasamy2017multi,song2019general,wu2020practical,takeno2020multi}, and active learning \citep{yi2021active,li2020deep,li2022batch}. The motivations behind the AMOO setting are clearly similar to those of proxy optimization and multi-fidelity optimization. However, our papers takes a pure optimization and gradient-descent perspective, which to our knowledge, is novel in the literature.

\section{Aligned Multi Objective Optimization}\label{sec:AMOO setting}


Consider an unconstrained multi-objective optimization where
 $F: \reals^{n} \to \reals^m$ is a vector valued function, 
 $
 F(\x) = \brac{f_1(\x), f_2(\x), \ldots, f_m(\x) },
 $
 and all functions $\{ f_i\}_{i\in [m]}$ are convex where $[m]:=\{1,\ldots,m\}$.
Without additional assumptions the components of $F(\x)$ cannot be minimized simultaneously. To define a meaningful approach to optimize $F(\x)$ one can study the Pareto front, or to properly define how to trade-off the objectives. In the AMOO setting we make the assumption the functions are aligned in a specific sense: we assume that the functions $\{ f_i\}_{i\in [m]}$ share an optimal solution\footnote{We also study an extension of AMOO where the functions can only be approximately simultaneously minimized. See Section~\ref{sec:robustness result}.}. Namely, there exists a point $\x_\star$ that minimizes all functions in $F(\cdot)$ simultaneously,
\begin{align}
    \x_\star \in \arg\min_{\x\in \reals^n} f_i(\x) \quad \forall i \in [m]. \label{eq:aligned_functions}
\end{align}
With this assumption one may hope to get quantitative benefits from the multi objective feedback. 
How can Gradient Descent (GD) be improved when the functions are aligned? 


A common algorithmic approach in the multi-objective setting is using a weight vector $\w\in \mathbb{R}^m$ that maps the vector $F(\x)$ to a single objective $f_{\w}(\x) := \w^T F(\x)$, and apply a gradient descent step on the weighted function (e.g.,~\citet{sener2018multi, yu2020gradient, liu2021conflict}). Existing algorithms suggest alternatives for choosing $\w$ via different weight optimizers. We follow this paradigm and refer to it as \texttt{Weighted-GD} (see Algorithm~\ref{alg:Weighted-GD}).

Towards developing algorithmic intuition for the AMOO setting we consider few examples. 

\paragraph{\emph{(i)} The Specification Example.}

Consider the case $F(\x)=(f_1(\x),f_2(\x))$, $\x\in \reals^2$ where
\begin{align*}
    &f_1(\x) = (1-\Delta) x_1^2 + \Delta x_2^2, \quad \\
    &f_2(\x) = \Delta x_1^2 +  (1-\Delta) x_2^2,
\end{align*}
for some small $\Delta\in [0,0.1]$. It is clear that $F(\x)$ can be simultaneously minimized in $\x_\star=\left( 0,\ 0 \right)$, hence, this is an AMOO setting. This example, as we demonstrate, illustrates an instance in which each individual function \textit{does not specify the solution well}, but with proper weighting the optimal solution is well specified.

First, observe both $f_1$ and $f_2$ are $\Delta$-strongly convex and $O(1)$-smooth functions. Hence, GD with properly tuned learning rate, applied to either $f_1$ or $f_2$ converges with linear rate of $\Omega(\Delta)$. This rate can be  dramatically improved by proper weighting of the functions. Let $f_{\w_U}$ be a function with equal weighting of both $f_1$ and $f_2$, namely, choosing $\w_U=(0.5,0.5)$. We get $f_{\w_U}(\x)=0.5 x_1^2 + 0.5 x_2^2$
which is $\Omega(1)$-strongly convex and $O(1)$-smooth. Hence, GD applied to $f_{\w_U}$ converges with linear rate of $\Omega(1)$--much faster than $O(\Delta)$ since $\Delta$ can be arbitrarily small.

\paragraph{\emph{(ii)} The Selection Example.}
Consider the case $F(\x)=(f_1(\x),\ldots,f_m(\x))$, $\x\in \reals^n$, where
\begin{align*}
    & \forall i \in [m-1] : f_i(\x) = (1-\Delta)x_1^2 + \Delta \sum_{j=2}^{n} x_j^2,\\
    &f_m(\x) = \sum_{j=1}^{n} x_j^2,
\end{align*}

and $\Delta\in [0,0.5]$. The common minimizer of all functions is $\x_\star=\bold{0} \in \reals^n$, and, hence, the objectives are aligned. Unlike the specification example, in the selection example, there is a single objective function among the $m$ objectives we should select to improve the convergence rate of GD. Further, in the selection example, choosing the uniform weight degrades the convergence rate. 

Indeed, setting the weight vector to be uniform $\w_{U}=\brac{1/m,\ldots,1/m}\in\reals^m$ leads to the function
$
    f_{\w_{U}}(\x) = (2-\Delta)/m \cdot x_1^2 +\sum_{j=2}^{n} (\Delta+1)/m \cdot x_j^2,
$
which is $O(1/m)$-strongly convex. Hence, GD applied to $f_{\w_{U}}$ converges in a linear rate of $O(1/m)$. On the other hand, GD applied to $f_m$ converges with linear rate of $\Omega(1)$. Namely, setting the weight vector to be $(0,\ldots,0,1)\in \reals^m$ improves upon taking the average when the number of objectives is large.

\paragraph{\emph{(iii)} Local Curvature Example.}
Consider the case $F(x)=(f_1(x),f_2(x))$, $x\in \mathbb{R}$ where
\begin{align*}
    &f_1(x) = \exp(x)- x,\\ 
    &f_2(x) = \exp(-x) + x,
\end{align*}
namely, $f_2(x)=f_1(-x)$. Both functions are simultaneously minimized in $x=0$. This example depicts a scenario in which different functions have a larger curvature in different segments of the parameter space; for $x>0$, $f_1(x)$ has a larger curvature, and for $x<0$ $f_2$ has a larger curvature. 

For such a setting, it is natural to toggle between the two functions, namely to set the weight vector as $\w_1=(1,0)$ for $x>0$ and as $\w_2=(0,1)$ for $x<0$. This approach, intuitively, should result in a faster convergence to the optimal solution compared to applying GD to an individual function or the average function, since it effectively applies GD to a function which is uniformly more curved.

The three aforementioned examples highlight a key takeaway: the curvature of the functions has a vital role in improving convergence guarantees for GD in AMOO. Indeed, all examples provided arguments as of how to improve the convergence of GD based on curvature information. In next sections we formalize this intuition. We introduce quantities that characterize notions of best curvature, and develop new GD based algorithms that provably converge with rates that depend on these quantities.    



\begin{algorithm}[t]
 \caption{\texttt{Weighted-GD}}\label{alg:Weighted-GD} 
\begin{algorithmic}
  \STATE \textbf{initialize:} 
  \STATE \quad Learning rate $\eta$, \texttt{Weight-Optimizer}
  \WHILE{$k=1,2,\ldots$}
  \STATE { \color{orange} $\w_k \gets \texttt{Weight-Optimizer}\brac{\{ f_i(\x_k) \}_{i=1}^m}$ }
  \STATE $\g_k \gets \nabla f_{\w_k}(\x_k)$ 
  \STATE $\x_{k+1} = \x_k - \eta \g_t$ 
  \ENDWHILE
\end{algorithmic}
\end{algorithm}

\section{The \CAMOO\ Weight Optimizer}\label{sec:camoo}


We start by introducing and analyzing the Curvature Aligned Multi Objective Optimizer (\CAMOO). \CAMOO\ (Algorithm~\ref{alg:AMOOO}) directly optimizes the curvature of the weighted function.  Towards developing it, we define the global adaptive strong convexity parameter, $\muglobal$. Later we show that when the weighted loss is determined by \CAMOO\ GD converges in a rate that depends on $\muglobal$.

We start by defining the optimal adaptive strong convexity over the class of weights:
\begin{definition}[Global Adaptive Strong Convexity $\muglobal$] \label{def:mu_global}
     The global adaptive strong convexity parameter, ${\muglobal\in \reals_{+}}$, is the largest value such that $\forall \x\in \mathbb{R}^n$ exists a weight vector $\w\in \Delta_m$ such that  
    \begin{align}
       \lambda_{\min}\left(\sum_{i=1}^m w_i \nabla^2 f_i(\x) \right)\geq \muglobal. \label{eq:mu_global_definition}
    \end{align}
\end{definition}

For each $\x\in \mathbb{R}^n$, there may be a different weight vector that solves $\argmax\lambda_{\min}\left(\nabla^2 f_\w(\x) \right)$ and locally maximizes the curvature. The global adaptive strong convexity parameter $\muglobal$ is the largest lower bound in $\mathbb{R}^n.$ The specification and selection examples (Section~\ref{sec:AMOO setting}) demonstrate $\muglobal$ can be much larger than both the strong convexity parameter of the average function or of each individual function; for both $\muglobal=O(1)$ whereas the alternatives may have arbitrarily small strongly convex parameter value. Further, the local curvature example highlights a case in which the optimal weight may have dependence on $\x$.

Additionally, this structural definition implies that there is a unique point $\x_\star$ that simultaneously minimizes the objectives. Due to this observation, in the following, we aim to design provable GD methods that converge to this optimal point~$\x_\star$. The following result formalizes this by showing that under a weaker condition compared to $\muglobal>0$ there is a unique minimizer (see Appendix~\ref{app:missing_proofs_unique_opt_sol}  for a proof).

\begin{restatable}[Unique Optimal Solution]{proposition}{UniqueSol}\label{prop:unique_optimal_sol} Assume there exists $\x_\star\in \mathbb{R}^n$ that simultaneously minimizes $\{f_i \}_{i\in [m]}$, namely, solves Eq.~\eqref{eq:aligned_functions}. If $\max_{\w\in \Delta_m}\lambda_{\min}\left(\nabla^2 f_{\w}(\x_\star) \right)>0$ then $\x_\star$ is unique.
\end{restatable}

Definition~\ref{def:mu_global} not only quantifies an optimal notion of curvature, but also directly results with the \CAMOO\ algorithm. \CAMOO\ sets the weights according to Eq.~\eqref{eq:mu_global_definition}, namely, at the $k^{\mathrm{th}}$ iteration, it finds the weight vector for which $f_\w(\x_k)$ has the largest local curvature. Then, a gradient step is applied in the direction of $\nabla f_{\w}(\x_k)$ (see Algorithm~\ref{alg:Weighted-GD}). Indeed, \CAMOO\ seems as a natural algorithm to apply in AMOO. Nevertheless, the analysis of \CAMOO\ faces key challenges that make its analysis less trivial than what one may expect. 

\paragraph{Challenge \emph{(i)}: $f_{\w_k}$ is not a strongly convex function.}  One may hope that standard GD analysis for strongly convex and smooth functions can be applied. It is well known that if a function $f(\x)$ is $\beta$ smooth and $\forall \x\in \mathbb{R}^n,\ \lambda_{\min}\brac{\nabla^2 f(\x)} \geq \mu$ then GD converges with $\mu/\beta$ linear rate. Unfortunately, a careful examination of this argument shows it fails.

Even though $\lambda_{\min}\brac{\nabla^2 f_{\w_k}(\x_k)}\geq \muglobal$ at each iteration $k$ of \CAMOO\ it does not imply that $f_{\w_k}$ is $\muglobal$ strongly convex for a fixed $\x_k$. Namely, it does not necessarily hold that for all $\x\in \mathbb{R}^n,\ \lambda_{\min} \brac{\nabla^2 f_{\w_k}(\x)} \geq \muglobal$, but only pointwise at $\x_k$ (E.g., the local curvature example highlights this issue. See Appendix~\ref{app:fwk is not strongly convex} for details). This property emerges naturally in AMOO, yet such nuance is inherently impossible in single-objective optimization.

\paragraph{Challenge \emph{(ii)}: Weighted function is not necessarily convex.} A naive reduction may be to apply GD to the function $f_{\w_\star(\x)}$ where $\w_\star(\x)\in \arg\max \lambda_{\min}\left(\sum_{i=1}^m w_i \nabla^2 f_i(\x) \right)$. Namely, to apply GD to a new weighted function that is determined by optimizing the curvature. Such an approach turns out as flawed from theoretical perspective; the function $f_{\w_\star(\x)}=\sum_{i\in [m]} w_{\star,i}(\x) f_i(\x)$ is not necessarily convex nor smooth due to the dependence on a weight vector that has an $\x$ dependence (see Appendix~\ref{app:naive reduction failure} for an example).

Next, we provide a positive result. When restricting the class of functions to the set of self-concordant and smooth functions (see Appendix~\ref{app:pre_and_prp} for formal definitions) we provide a convergence guarantee for \texttt{Weighted-GD} instantiated with \CAMOO\ that depends on~$\muglobal$. Further, the result shows that close to the optimal solution the convergence has linear rate in $O(\muglobal/\beta)$ (see Appendix~\ref{app:AMOO_results} for proof details). 


\begin{algorithm}[t]
\caption{\CAMOO}\label{alg:AMOOO} 
\begin{algorithmic}
  \STATE \textbf{inputs:} $\{ f_i(\x_k) \}_{i=1}^m$ 
  \vspace{0.4mm}
  \STATE \textbf{initialize:} $w_{\min}=\muglobal/\brac{8m\beta}$ 
  \vspace{0.4mm}
  \STATE Get Hessian matrices $\{ \nabla^2 f_i(\x_k) \}_{i=1}^m$ 
  \vspace{0.4mm}
  \STATE $\w\in \argmax\limits_{\w\in \Delta_{m,w_{\min}}} \lambda_{\min} \brac{\sum_{i} w_i \nabla^2 f_i(\x_{k})}$
  \vspace{0.4mm}
  \STATE \textbf{return:} $\w$
\end{algorithmic}
\end{algorithm}

\begin{restatable}[$\muglobal$ Convergence of \CAMOO]{theorem}{ExactAmooConvergence}
\label{thm:exact_amoo_convergence}
        Suppose $\{f_i\}_{i\in [m]}$ are $\beta$ smooth, $M_{\mathrm{f}}$ self-concordant, share an optimal solution $\x_\star$ and that $\muglobal > 0$. Let  $k_0 := \left\lceil \frac{16 \beta  \brac{\enorm{\x_0 - \x_\star} 3\sqrt{m}\beta M_{\mathrm{f}} -\sqrt{\muglobal}} }{ 3\muglobal^{3/2}}\right\rceil$, where $\enorm{\cdot}$ is the Euclidean-norm. Then, \texttt{Weighted-GD} instantiated with \CAMOO\ weight-optimizer and $\eta = 1/2\beta$ converges with rate:
    \begin{align*}
        \enorm{\x_{k} - \x_\star} \leq 
        \begin{cases}
            \enorm{\x_{k_0} - \x_\star} \brac{1-\frac{3\muglobal}{8\beta}}^{(k-k_0)/2}  & k\geq k_0\\
            \enorm{\x_0 - \x_\star}- k \frac{\muglobal^{3/2}}{16 \beta^2 \sqrt{m} M_{\mathrm{f}}} & o.w.
        \end{cases}
    \end{align*}
\end{restatable}
Importantly,  Theorem~\ref{thm:exact_amoo_convergence} holds without making strong convexity assumption on the individual functions, but only requires that the adaptive strong convexity parameter $\muglobal$ to be positive, as, otherwise, the result is vacuous.

The proof follows few key observations. The self-concordance property, we find, implies a useful inequality that depends only on local curvature (see Appendix~\ref{app:pre_and_prp}): 
\begin{align}
    f(\y) \geq f(\x) + \inner{\nabla f(\x)}{\y  - \x} +  \frac{c\enorm{\y-\x}^2_{\nabla^2 f(\x)}}{1+M_{\mathrm{f}}\enorm{\y-\x}_{\nabla^2 f(\x)}},
    \label{eq:main paper self concordance}
\end{align}
for some constant $c>0$. This inequality share similarity with the more standard inequality used in analysis of GD convergence for strongly convex function~\cite{boyd2004convex}, however, unlike the former, it depends on local curvature. In order to satisfy the assumption the weighted function $f_{\w_k}$ is self-concordant we rely on the fact $\min_{i\in [m]} w_{k,i} > w_{\min}$ by design of \CAMOO. Then, additional analysis leads to a recurrence relation of the residual $r_k:=\enorm{\x_k-\x_\star}^2$ with the form of
\begin{align}
   r_{k+1}^2\leq r_k^2-\alpha_1 r_k^2/(1+\alpha_2 r_k).  \label{eq:main paper recursion}
\end{align}

We provide a bound on this recurrence relation in Appendix~\ref{app:pre_and_prp} to arrive to the final result of Theorem~\ref{thm:exact_amoo_convergence}. 


\subsection{Practical Implementation}\label{sec:prac_imp}

We now describe a scalable approach for implementing \CAMOO\ that we experiment with in next sections. Towards large scale application of \CAMOO\ with modern deep learning architectures we approximate the Hessian matrices with their diagonal.  Prior works used the diagonal Hessian approximation as pre-conditioner~\cite{chapelle2011improved,schaul2013no,yao2021adahessian,liu2023sophia,achituve2024bayesian}. Notably, with this approximation the computational cost of \CAMOO\ scales linearly with number of parameters in the Hessian calculation, instead of quadratically. The following result establishes that the value of optimal curvature, and, hence the convergence rate of \texttt{Weighted-GD} instantiated with \CAMOO, degrades continuously with the quality of Hessian approximation (see Appendix~\ref{app:weyls consequence} for proof details).

\begin{restatable}{proposition}{ApproxHessian}
\label{thm:app_hessian}
    Assume that for all $i\in [m]$ and $\x\in \mathbb{R}^n$ $|| \nabla^2 f_i(\x) - \mathrm{Diag}\brac{\nabla^2 f_i(\x)} ||_2\leq \enorm{\Delta}$ where $\enorm{\A}_2$ is the spectral norm of $\A\in \reals^{n\times n}$. Let $\w_\star \in \argmax_{\w\in \Delta_{m}} \lambda_{\min} \brac{\sum_{i} w_i \nabla^2 \mathrm{Diag}\brac{f_i(\x)}}$. Then,
    $
        \lambda_{\min} \brac{\sum_{i} w_{\star,i} \nabla^2  f_i(\x) }\geq \muglobal - 2\enorm{\Delta}.
    $
\end{restatable}

Next we provide high-level details of our implementation (also see Appendix~\ref{app:practical_implementation}).
\paragraph{Diagonal Hessian estimation via Hutchinson's Method.} We use the Hutchinson method~\cite{hutchinson1989stochastic, chapelle2011improved, yao2021adahessian} which provides an estimate to the diagonal Hessian by averaging products of the Hessian with random vectors. Importantly, the computational cost of this method scales linearly with number of parameters.

\paragraph{Maximizing the minimal eigenvalue.} Maximizing the minimal eigenvalue of symmetric matrices is a convex problem (\citet{boyd2004convex}, Example~3.10) and can be solved via semidefinite programming. For diagonal matrices the problem can be cast as a simpler max-min bilinear problem, 
$
    \argmax_{\w\in \Delta^m} \min_{\q\in \Delta^n} \w^\top \A \q,
$
where $n$ is the dimension of parameters, $\A\in \reals^{m\times n}$ and its $i^{th}$ row is the diagonal Hessian of the $i^{th}$ objective, namely, $\forall i\in [m],\ \A[i,:]=\diag(\nabla^2 f_i(\x))$.

This bilinear optimization problem is well studied~\citep{rakhlin2013optimization,mertikopoulos2018mirror,daskalakis2018last}. We implemented the PU method of~\citet{cen2021fast} which, loosely speaking, executes iterative updates via exponential gradient descent/ascent. PU has a closed form update rule and its computational cost scales linearly with number of parameters.

\section{The \PAMOO\ Weight Optimizer}\label{sec:pamoo}

In previous section, we introduced the global adaptive strong convexity parameter, $\muglobal$, the \CAMOO\ weight optimizer that chooses the weight vector adaptively and showed it has asymptotic linear convergence guarantees that depend on $\muglobal$. In this section we explore an additional adaptive mechanism for choosing the weight vector based on Polyak step-size design. We introduce the Polyak Aligned Multi-Objective Optimizer (\PAMOO). Unlike \CAMOO, it only requires information on the gradient, without requiring access to the Hessians. Interestingly, even though computationally much cheaper, \PAMOO\ exhibits improved convergence rate compared to \CAMOO. 

\PAMOO\ (Algorithm~\ref{alg:PAMOO})  generalizes the Polyak step-size design to AMOO. As such, it requires access to the optimal function values,  $f_i(\x_\star)$ for all $i\in [m]$. This information may not be readily available in general. However, in modern machine learning applications this value is often zero~\citep{loizou2021stochastic,wang2023generalized}. Further, there are variations of Polyak step-size in which a the optimal value is estimated~\citep{gower2021stochastic,orvieto2022dynamics}. We leave potential extensions of these to AMOO for future work. Compared to \CAMOO, \PAMOO\ only requires  access to the gradients of the objectives, and does not assume access to the Hessians. Further, it only requires solving a simple convex quadratic optimization problem in dimension $\reals^m$. This problem is simpler than a maximization of the smallest eigenvalue, required to solve by \CAMOO. 

We now define the local strong convexity parameter over a class of weights.  As we later show, this parameter controls the convergence rate of \PAMOO:
\begin{definition}[Local Strong Convexity $\mulocal$] \label{def:mu_local}
     The local strong convexity parameter, ${\mulocal\in \reals_{+}}$, is the largest value such that exists a weight vector $\w\in \Delta_m$ such that  
    \begin{align}
       \lambda_{\min}\left(\sum_{i=1}^m w_i \nabla^2 f_i(\x_\star) \right)\geq \mulocal, \label{eq:mu_local_definition}
    \end{align}
    where $\x_\star$ simultaneously minimizes $\{f_i\}_{i\in [m]}.$
\end{definition}
Notice that Proposition~\ref{prop:unique_optimal_sol} implies that $\x_\star$ is necessarily unique, and, hence $\mulocal$ is unique and well defined. Further, unlike the global adaptive strong convexity parameter, the local strong convexity parameter only depends on the curvature at $\x_\star$, namely, at the optimal solution. From Definition~\ref{def:mu_global} and Definition~\ref{def:mu_local} we directly get that $\mulocal\geq \muglobal$. 

The \PAMOO\ algorithm is inspired by the Polyak step-size design~\cite{polyak1987introduction, hazan2019revisiting} for choosing the learning rate in a parameter-free way. To provide with intuition for our derivation, consider the GD update rule in a single objective optimization problem, $\x_{k+1}=\x_k-\eta_k \g_k$. To derive the Polyak step-size design, observe that by convexity and the GD update rule we have that
\begin{align}
    &\enorm{\x_{k+1}-\x_\star}^2 \label{eq:polyak standard main paper}\\
    &\leq \enorm{\x_{k}-\x_\star}^2 -2 \eta_k \brac{f(\x_k)-f(\x_\star)} + \eta_k^2\enorm{\nabla f(\x_k)}^2. \nonumber
\end{align}
Minimizing the upper bound on the decrease with respect to $\eta_k$ leads to $\eta_k = \brac{f(\x_k)-f(\x_\star)}/\enorm{\nabla f(\x_k)}^2,$ which is the Polyak step-size design choice. 

 Building on this derivation we develop the \PAMOO\ weight optimizer (see Algorithm~\ref{alg:PAMOO}). As we now show, interestingly, its convergence rate has the same functional form as \CAMOO, while depending on the local strong convexity parameter $\mulocal$ instead in $\muglobal$. Hence, \PAMOO\ has an improved upper bound on its convergence rate compared to \CAMOO\ (see Appendix~\ref{app:AMOO_results} for proof details). 


\begin{algorithm}[t]
 \caption{\PAMOO}
 \label{alg:PAMOO}    
\begin{algorithmic}[1]
 \STATE \textbf{inputs:} $\{ f_i(\x_k) \}_{i=1}^m$
 \STATE $\w\in \arg\max_{\w\in \reals^m_+} 2\w^\top \Delta_\x - \w^\top \J_\x^\top\J_\x \w$ \label{eq:pamoo optimization main paper}
 \STATE $\Delta_{\x} := \left[  \Delta_{\x,1}\dots \Delta_{\x,m} \right]$,\  $\Delta_{\x,i}:=f_{i}(\x_k)-f_{i}(\x_\star)$
  \STATE $\J_\x := \left[ \nabla f_1(\x) \dots \nabla f_m(\x)  \right] \in \reals^{n \times m}$
 \STATE \textbf{return:} $\w$
\end{algorithmic}
\end{algorithm}

\begin{restatable}[$\mulocal$ Convergence of \PAMOO]{theorem}{ExactPAmooConvergence}
\label{thm:exact_pamoo_convergence}
    Suppose $\{f_i\}_{i\in [m]}$ are $\beta$ smooth, $M_{\mathrm{f}}$ self-concordant, share an optimal solution $\x_\star$ and $\mulocal > 0$. Let $k_0 := \left\lceil \frac{64 \beta  \brac{\enorm{\x_0 - \x_\star} 3\sqrt{m}\beta M_{\mathrm{f}} -\sqrt{\mulocal}} }{ 3\mulocal^{3/2}}\right\rceil$, where $\enorm{\cdot}$ is the Euclidean-norm. Then, \texttt{Weighted-GD} instantiated with \PAMOO\ weight-optimizer and $\eta = 1$  converges with rate:
    \begin{align*}
            \enorm{\x_{k} - \x_\star} \leq 
        \begin{cases}
            \enorm{\x_{k_0} - \x_\star} \brac{1-\frac{3\mulocal}{32\beta}}^{(k-k_0)/2}  & k\geq k_0\\
            \enorm{\x_0 - \x_\star}- k \frac{\mulocal^{3/2}}{64 \beta^2 \sqrt{m} M_{\mathrm{f}}} & o.w.
        \end{cases}
    \end{align*}
\end{restatable}


This result is established by generalizing the Polyak step-size method analysis (see~Eq.~\eqref{eq:polyak standard main paper}) while using a key observation. In the analysis, we upper bound the residual $\enorm{\x_k-\x_\star}^2$ by a quantity that depends on the curvature of the optimal weight vector at $\x_\star$, which is lower bounded by $\mulocal$, by definition. This is valid since we can replace $\w_k$ with an alternative weight vector -- only used in the analysis -- since $\w_k$ is an optimal solution of $\max_{\w\in \reals^m_+} 2\w^\top \Delta_\x - \w^\top \J_\x^\top\J_\x \w$. This flexibility allows us to upper bound expressions that depend on $\w_k$ by any nonnegative weight vector $\w\in \mathbb{R}^m_{+}$. Furthermore, we use similar tools as were developed in the analysis of \CAMOO: the property of self-concordant functions (Eq.~\eqref{eq:main paper self concordance}) and the recurrence relation bound (Eq.~\eqref{eq:main paper recursion}). 

\subsection{Practical Implementation}


\PAMOO\ can be implemented in a straightforward and scalable way. It requires access to the Jacobian matrix, which can be readily calculated by accessing the gradients.  Calculating the matrix $\mathbf{J}_{\x}^\top \mathbf{J}_{\x} \in \mathbb{R}^{m\times m}$ 
has a computational cost of $O(nm^2)$, where $n$ is the dimension of the parameter space and $m$ is the number of objectives, and can be parallelized. Lastly, it requires solving a quadratic convex optimization problem in $\mathbb{R}^m$ where $m$ is expected to be of the order of $\sim 10$. This can be done efficiently with different convex optimization algorithms, e.g., projected GD. In practice, we initialize the weight vector $\mathbf{w}$ using the weight vector of the previous iterate. Hence, an approximate optimal solution is found within a few projected gradient descent iterations. \PAMOO\ as a potential advantage over \CAMOO\ due to its scalability and its simple implementation. Lastly, generalizing it to methods in which the optimal value is estimated instead of being given is left for future work~\citep{orvieto2022dynamics,gower2021stochastic}.

\section{$\epsilon$-AAMOO: Robustness to Alignment Assumption}\label{sec:robustness result}


We have analyzed \CAMOO\ and \PAMOO\ assuming perfectly aligned objectives (Eq.~\eqref{eq:aligned_functions}). However, in practice, objectives may be `similar' rather than perfectly aligned. Next, we extend AMOO to address this more realistic scenario and assume that the alignment assumption is approximately correct. We show that both algorithms are robust to such an approximation and remain effective under these conditions.

Instead of assuming the objectives are perfectly aligned, we consider the $\epsilon$-Approximate AMOO (AAMOO) framework, in which there exists a near-optimal solution with respect to all objectives. Let $\mC_\epsilon$ be the set of $\epsilon$-approximate solutions:
\begin{align}
    \mC_\epsilon = \{ \x \in \reals^n | \ f_i(\x) - f_i(\x_{\star}^i) \leq \epsilon ~~ \forall i\in[m] \}, \label{eq:Ce_set}
\end{align}
where $\x_\star^i \in \argmin_{\x \in \reals^n}  f_i(\x)$. In $\epsilon$-AAMOO setting we assume that $\mC_\epsilon$ is not the empty set. This corresponds to a case in which exists a point that is a near-optimal solution for all objectives and can be understood as a natural generalization of the stricter AMOO setting in which $\epsilon=0$. 

\begin{figure*}[t]
\vspace{-8pt}
    \centering
    \begin{minipage}{0.5\textwidth}
        \centering
        \includegraphics[width=\textwidth]{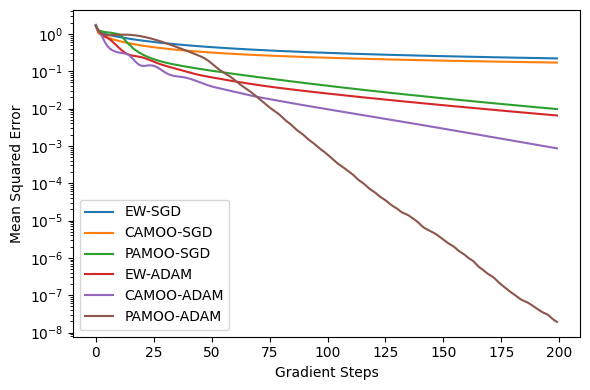} 
    \end{minipage}\hfill
    \begin{minipage}{0.5\textwidth}
        \centering
        \includegraphics[width=\textwidth]{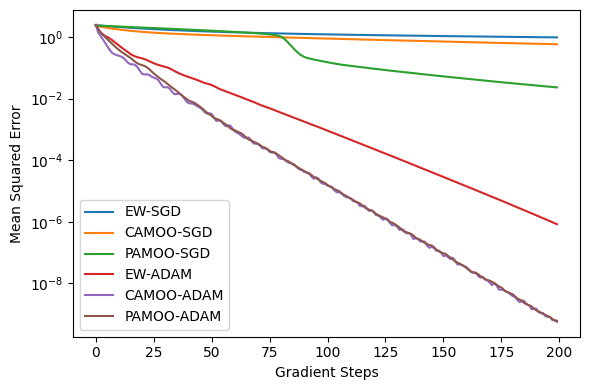} 
    \end{minipage}
    \caption{MSE versus gradient steps. \textbf{(left)} local curvature example instance, \textbf{(right)} selection example instance .}
    \label{fig:experiments_main}
\end{figure*}

The main result of this section shows that for both \CAMOO\ and \PAMOO\ the distance between $\x_k$ and an $\epsilon$-approximate solution $\x_\star^\epsilon \in \mC_\epsilon
$, converges to $\epsilon_{\mathrm{app}}$. Further, $\epsilon_{\mathrm{app}}$ is a polynomial function of $\epsilon$, structural quantities of the problem, and vanishes as $\epsilon\rightarrow 0$. This provides an approximate convergence guarantee of both algorithms. For \CAMOO, the result depends on the $\muglobal$ curvature, as in the AMOO setting. For \PAMOO the convergence depends on the best curvature within the set $\mC_{\epsilon}$ defined as follows:
\begin{definition}[$\epsilon$-Local Strong Convexity $\mulocaleps$] \label{def:mulocaleps}
     The $\epsilon$-local strong convexity, ${\mulocaleps\in \reals_{+}}$, is the largest value such that $\exists \x\in \mC_\epsilon$ exists a weight vector $\w\in \Delta_m$ such that  
    \begin{align}
       \lambda_{\min}\left(\sum_{i=1}^m w_i \nabla^2 f_i(\x) \right)\geq \mulocaleps. \label{eq:mu_mulocaleps_definition}
    \end{align}
\end{definition}
Let $\mu_\star(\x)=\max_{\w\in \Delta_m} \lambda_{\min}\left(\sum_{i=1}^m w_i \nabla^2 f_i(\x) \right)$ be the largest curvature at point $\x$. Then, $\mulocaleps$ is defined as the \emph{maximal} largest curvature in the set of near optimal solutions $\mC_\epsilon,$ namely, $\max_{\x\in \mC_\epsilon} \mu_\star(\x).$ Unlike $\muglobal$ that depends on the worst case curvature at all points, $\mulocaleps$  depends on the best-case curvature in $\mC_\epsilon.$ \PAMOO\ convergence depends on this quantity, which also satisfies $\mulocaleps\geq \muglobal$ 
for any $\epsilon$.

We now provide an informal description of an approximate convergence guarantee for both \CAMOO\ and \PAMOO\ (see Appendix~\ref{app:epsilon_app_sol} for the formal theorems and proofs):
\begin{theorem}[(Informal) Approximate Convergence in $\epsilon$-AAMOO]\label{thm:informal_epsilon_AAMOO} Suppose $\{ f_i\}_{i\in [m]}$ are $\beta$ smooth and $M_{\mathrm{f}}$ self-concordant. Let $\mu_{\mathcal{A}}$ be $\muglobal$ and $\mulocaleps$ for \CAMOO\ and \PAMOO, respectively. Assume that $\epsilon$-AAMOO holds and that $\epsilon \leq \mathrm{poly}\brac{ \mu_{\mathcal{A}},1/\beta,1/m,M_{\mathrm{f}}}$. Then, exists $\x_\star^\epsilon\in \mC_\epsilon$ such that for both \CAMOO\ and \PAMOO\ iterates satisfy
\begin{align*}
\enorm{\x_k - \x_\star^\epsilon} \leq
    \begin{cases}
        f_1(1- c \frac{\mu_{\mathcal{A}}}{\beta})^{(k-k_0)/2} +f_2 \epsilon^{1/4} & k\geq k_0\\
         \enorm{\x_0 - \x_\star^\epsilon} - f_3 k & o.w.
    \end{cases}
\end{align*}
Where $c\in (0,1)$, $f_1 = \enorm{\x_{k_0} - \x_\star^\epsilon}$ and $f_2, f_3$ and $k_0$ are polynomial functions of $\beta, \mu_{\mathcal{A}},m$ and $M_{\mathrm{f}}.$ 
\end{theorem}
Namely, the performance of  \CAMOO\ and \PAMOO\ degrades continuously with $\epsilon$, as the alignment assumption is violated. This result holds without any modification to the algorithms whatsoever.  




\section{Toy Experiment}

We implemented \CAMOO, \PAMOO\ and compared them to a weighting mechanism that uses equal weights on the objectives (\texttt{EW}). We tested these three  algorithms as the \texttt{Weight-Optimizers} in \texttt{Weighted-GD} (see Algorithm~\ref{alg:Weighted-GD}). We experimented with SGD and ADAM as the optimizers of the weighted loss. In the learning problem we consider one network is required to match the outputs of a second fixed network. We denote the fixed network with parameters $\theta_\star$ as $h_{\theta_\star}: \mathbb{R}^{d_i}\rightarrow \mathbb{R}^{d_o}$ and the second network with parameters $\theta$ as $h_{\theta}:\mathbb{R}^{d_i}\rightarrow \mathbb{R}^{d_o}$. Both are 2-layer neural networks with relu-activation and 512 hidden units. 

We draw data from a uniform distribution $\mathcal{D}=\{ \x_i \}_{i}$ where $\x_i\in \mathrm{Uniform}([-1,1]^{d_i})$. We consider three loss functions:
\begin{align*}
    &\forall i \in [3]: \quad f_i(\theta) \!=\! \frac{1}{|\mathcal{D}|}\!\!\sum_{\x\in \mathcal{D}}\!\!\brac{(h_\theta(\x)\! -\! h_{\theta_\star}(\x))^\top \H_i (h_\theta(\x)\! -\! h_{\theta_\star}(\x))}^{\alpha_i}\!,
\end{align*}
where $\H_i\in \mathbb{R}^{d_o\times d_o}$  is a positive definite matrix and $\alpha_i\geq 1$. Observe that all loss functions are minimized when $h_\theta(\x)=h_{\theta_\star}(\x)$, and, hence, it is an instance of AMOO. We investigated two instances. First, a selection example instance in which $\alpha_i=1$, and $\H_i= \mathrm{diag}(1, 0.01^i,\cdots,0.01^{i})$ for $i\in \{0,1,2\}$. There we expect the algorithms to adapt to the first loss function,~$f_1$. Second, a local curvature example where  $\H_i=\I$, and $\alpha_i\in \{1,1.5,2\}.$ For such a choice $f_1$ has larger curvature for large losses whereas $f_2$ and $f_3$ have larger curvature  for small losses. 

We track the performance by measuring the mean-squared error between the networks:
$$
    {\mathrm{MSE}=\frac{1}{|\mathcal{D}|} \sum_{\x\in \mathcal{D}} \enorm{(h_\theta(\x) - h_{\theta_\star}(\x))}}.
$$

Figure~\ref{fig:experiments_main} shows the convergence plots of our experiments. These highlight the potential of using GD algorithms that designed for AMOO: both \CAMOO\ and especially \PAMOO\ show an improved convergence rate. We provide additional experimental results and details in Appendix~\ref{app:practical_implementation}. The results show that \CAMOO\ modifies the weights, as expected, by adapting them to the local curvature; i.e., it changes the weights gradually during the optimization phase.


\section{Conclusions}

In this work, we introduced the AMOO framework to study how aligned or approximately aligned multi-objective feedback can improve gradient descent convergence. We designed the \CAMOO\ and \PAMOO\ algorithms, which adaptively weight objectives and offer provably improved convergence guarantees. Future research directions include determining optimal rates for AMOO and conducting comprehensive empirical studies in different domains. Additionally, in this work, we have not explored a stochastic or non-convex optimization frameworks of AMOO, which we believe is of interest for future work.  We conjecture that algorithmic advancements in AMOO  will improve our ability to scale learning algorithms to handle large number of related tasks efficiently with minimal hyper-parameter tuning; a goal much needed in modern machine learning practice.





\bibliographystyle{plainnat}
\bibliography{citation}

\appendix

\begin{figure}[htbp]
\centering
\begin{minipage}{0.9\textwidth}
    \centering
    \includegraphics[width=1.0\textwidth]{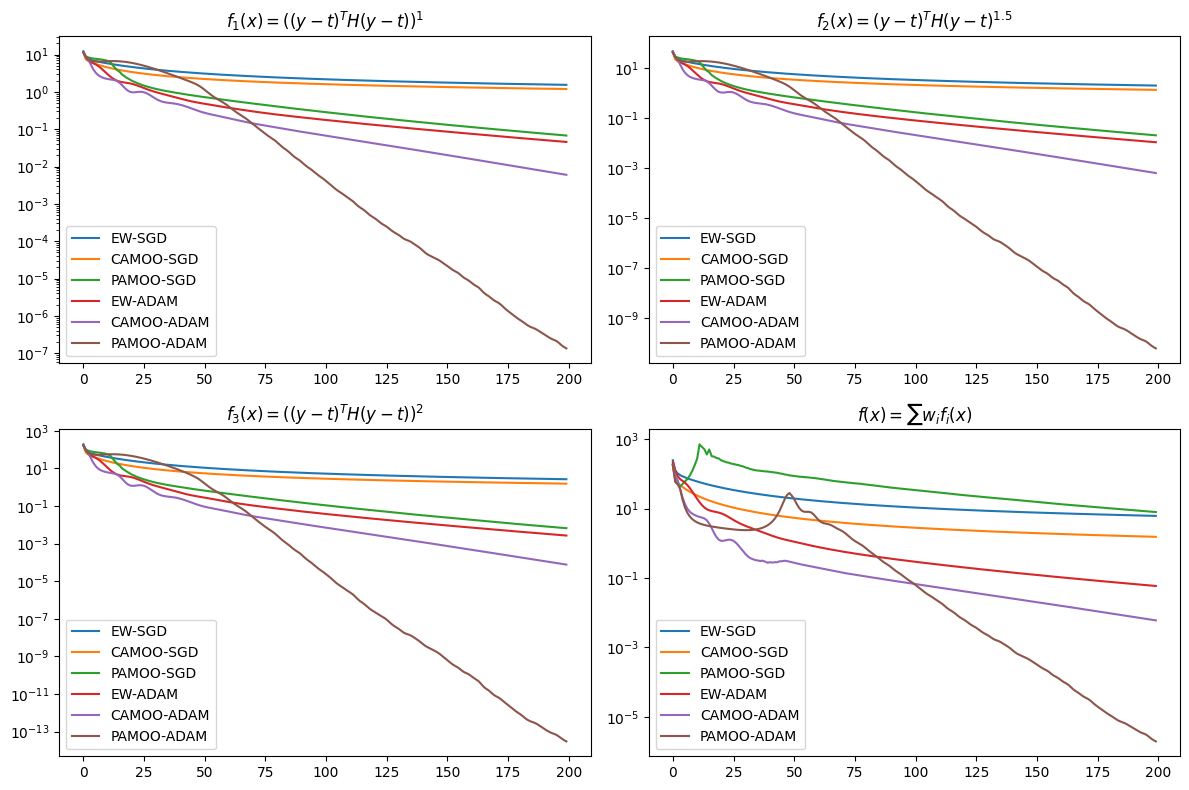}
    \caption{Local curvature example, all loss functions.}\label{fig:local_curvature_losses}
\end{minipage}

\vspace{1cm}

\begin{minipage}{0.9\textwidth}
    \centering
    \includegraphics[width=1.0\textwidth]{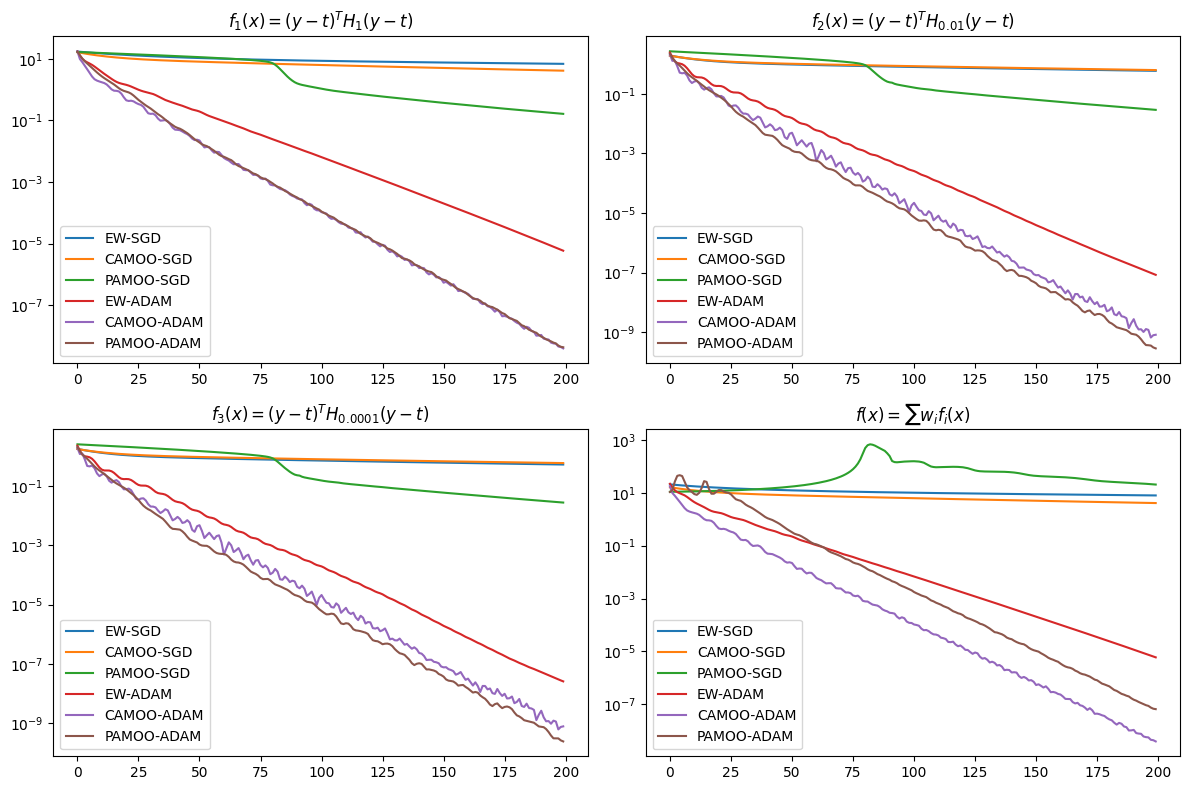}
    \caption{Selection example, all loss functions.}\label{fig:selection_losses}
\end{minipage}
\label{fig:subfigures}
\end{figure}

\section{Additional Experimental Details}\label{app:practical_implementation}

We share an IPython notebook in which we implement our algorithms and was used to generate all plots accompanies this submission.

\textbf{Dataset.}
We sample 200 points from an independent uniform distribution $\x_i\in \mathrm{Uniform}([-1,1]^{20})$. 
We generate target data from a randomly initialized network $t(x_i) = h_{\theta_\star}(x_i) + 10.$
The target dimension is $7$.

\textbf{Network architecture.}
We choose the ground truth network and target network to have the same architecture. Both are 2 layer neural networks with 512 hidden dimensions and ReLu activation. The neural network outputs a vector in dimension $7$.

\textbf{Training.}
For both problems, we set the learning rate of SGD to $0.0005$ and ADAM to $0.005$, we use the same learning rates for \PAMOO. For \CAMOO we multiply the learning rate by the number of loss functions, $3$. While equal weighting sets all weights to 1, \CAMOO is constrained to set the sum of all weights to one, our adjustment accounts for this normalization.
%


\textbf{General parameters for \CAMOO.}
We set the number of samples for the Hutchinson method to be $N_{\mathrm{Hutch}}=10$. Namley, we estimate the Hessian matrices by averaging $N_{\mathrm{Hutch}}=10$ estimates obtained from the Hutchnison method.  Further, at each training step we perform a single update of the weights based on the PU update rule of~\citet{cen2021fast} to solve the max-min Bilinear optimization problem (see Section~\ref{sec:prac_imp}). We use their primal-dual algorithm, and choose the learning rate as they specified $1 / (2 \max_{i,j}|\A_{ij}| + \tau_{\CAMOO})$ where $\A$ is the matrix in the Bilinear optimization problem and $\tau_{\CAMOO}=0.01$ is a regularization parameter we choose. Additionally, we set the number of iterations of the primal-dual algorithm to be $100$ per-step. We did not choose either parameter with great care, exploring best settings is part of future explorations.

\textbf{General parameters for \PAMOO.} We solved the constraint convex optimization problem in \PAMOO\ (see Algorithm~\ref{alg:PAMOO}) via the projected GD algorithm, where the projection on $\mathbb{R}^m_+$ is done by clipping negative values to $10^{-6}$. We set the learning rate to be $3e^{-3}$, and added a small regularization $\J_\x^\top\J_\x \rightarrow \J_\x^\top\J_\x + \tau_{\PAMOO} \I$ to avoid exploding weights, where $\tau_{\PAMOO}=1e^{-4}.$ 

\textbf{Additional Plots.}
For completeness, we present the three loss functions of the objectives $\{ f_i \}_{i\in [3]}$ and the weighted loss as a function of GD iterates. Figure~\ref{fig:local_curvature_losses} depicts the losses for the local curvature example instance, and Figure~\ref{fig:selection_losses} depicts the losses for the selection example instance. Unlike \texttt{EW}, \CAMOO\ and \PAMOO\ adaptively modify the weight vector and adjust it to current parameters. 

We measured the weight vector as a function of GD iterates and present the results in Figure~\ref{fig:app_weight_vector_evolution}. 
\begin{itemize}
    \item \textbf{Local curvature example instance.} For the local curvature example instance we expect to see a switching behavior in \CAMOO\ (see Figure~\ref{fig:camoo_local}). Namely, when the loss is large, the weight vector of \CAMOO\ should assign most of its weight to the quartic loss function, $f_3$, since it has the largest curvature for large deviations from optimality. For small deviations, we expect \CAMOO\ to assign weights to the quadratic loss, $f_1$.

    \item \textbf{Selection example instance.} For the selection example instance we expect \CAMOO\ to set the weights as a 1-hot vector on $f_1$, since its the quadratic function with largest curvature across all dimensions. Figure~\ref{fig:app selection camoo} exemplifies this.

\end{itemize}

\begin{figure*}[t]
\vspace{-8pt}
    \centering
    \begin{minipage}{0.45\textwidth}
        \centering
        \includegraphics[width=\textwidth]{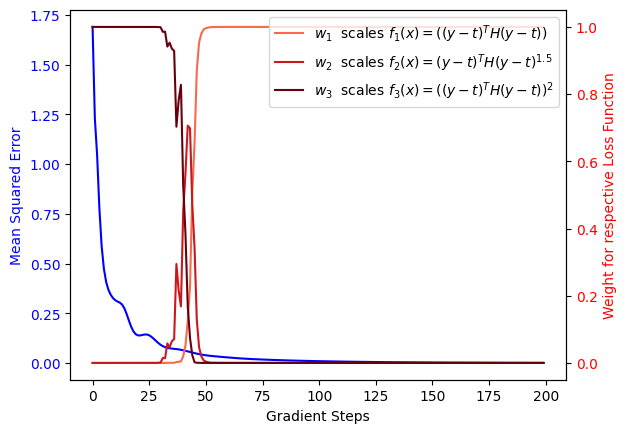}
        \caption{Local curvature example, \CAMOO. The weights flip when the curvature of the loss function changes.} \label{fig:camoo_local}
    \end{minipage}\hfill
    \centering
    \begin{minipage}{0.45\textwidth}
        \centering
        \includegraphics[width=\textwidth]{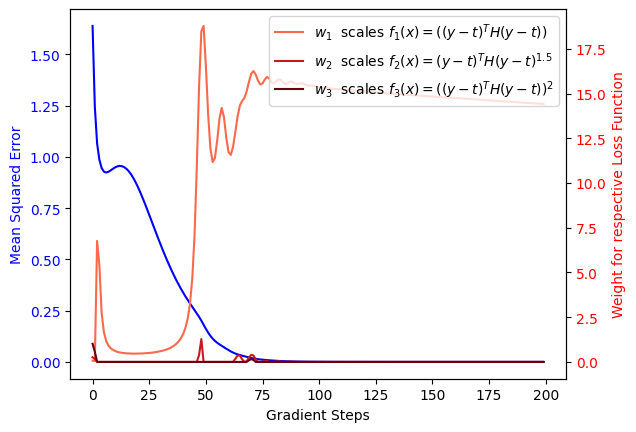} 
        \caption{Local curvature example, \PAMOO.}
    \end{minipage}\hfill

    \vspace{1cm}

        \begin{minipage}{0.45\textwidth}
        \centering
        \includegraphics[width=\textwidth]{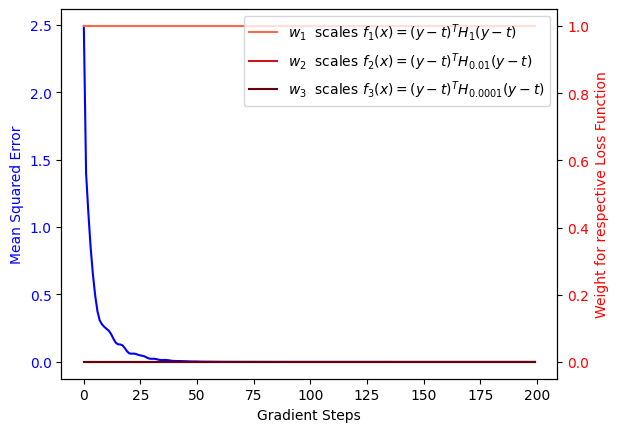}
        \caption{Selection example, \CAMOO. } \label{fig:app selection camoo}
    \end{minipage}\hfill
    \centering
    \begin{minipage}{0.45\textwidth}
        \centering
        \includegraphics[width=\textwidth]{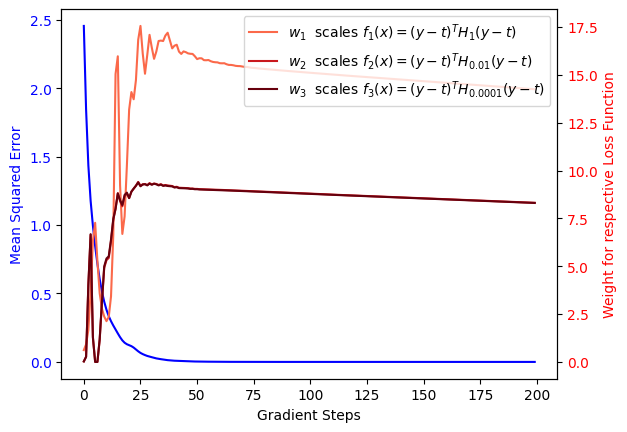} 
        \caption{Selection example, \PAMOO.}
    \end{minipage}\hfill
    
    \label{fig:app_weights_change}
    \caption{Weight vector evolution versus GD iterates.}\label{fig:app_weight_vector_evolution}
\end{figure*}
\newpage
\section{Challenges in Analyzing~\CAMOO}


\subsection{$f_{\w_k}$ is not a Strongly Convex Function }\label{app:fwk is not strongly convex}

We provide an example that shows that for the AMOO setting, when $\muglobal>0$, the fact that $\lambda_{\min}\brac{\nabla^2 f_{\w_k}(\x_k)}\geq \muglobal$ throughout the iterates of \CAMOO\ does not imply that each $f_{\w_k}$ is $\muglobal$-strongly convex.

The counter example is the local curvature example (see Section~\ref{sec:AMOO setting}), namely,
\begin{align*}
    &f_1(x) = \exp(x)-x, f_2(x) = \exp(-x)+x,
\end{align*}
and $x\in \reals$. The minimum point of both $f_1$ and $f_2$ is at $x=0$. The Hessians of the functions are:
\begin{align}
    &\nabla^2 f_1(x) = \exp(x), \nabla^2 f_2(x) = \exp(-x). \label{eq:app hessian local curvature}
\end{align}
We see that $\nabla^2 f_1(x)>1> \nabla^2 f_2(x)$ for $x>0$, $\nabla^2 f_1(x)<1< \nabla^2 f_2(x)$ for $x<=0$, and $\nabla^2 f_1(0)= \nabla^2 f_2(0)=1$. 

This implies that $\muglobal = 1$, and $\CAMOO$ will set the weight vector as $\w_{+}=(1,0)$ for $x>0$ and $\w_{-}=(0,1)$ for $x\leq0$. However, it is readily observed that neither $f_{\w_{-}}$ or $f_{\w_{+}}$ are $1$-strongly convex functions. The smallest value of the individual Hessians is zero since
\begin{align*}
    &\inf_{x\in \reals}\nabla^2 f_{\w_{-}}(x)= \inf_{x\in \reals}\exp(x)=0\\
    & \inf_{x\in \reals}\nabla^2 f_{\w_{+}}(x)= \inf_{x\in \reals}\exp(-x)=0,
\end{align*}
by Eq.~\eqref{eq:app hessian local curvature}. Hence, the functions $f_{\w_k}$ produced in the iterates of \CAMOO\ are not strongly convex.

\subsection{Weighted function is not necessarily convex}\label{app:naive reduction failure}

We provide a simple counter example that shows failure of a naive reduction in which we construct $f_{\w_\star(\x)}(\x)=\sum_i w_{\star ,i}(\x) f_i(\x)$, where $\w_\star(\x)$ optimizes the curvature at each point via,

$$
\w_\star(\x) \in \argmax_{\w \in \Delta_m} \lambda_{\min} \brac{\sum_{i\in [m]} w_i \nabla^2 f_i(\x_k)} 
$$

and apply gradient descent to the function $f_{\w_\star(\x)}$. The reason such approach is problematic is the fact that $f_{\w_\star(\x)}$ may no longer be a convex function. This is a result of the extra dependence on $\x$ of $\w_\star(\x)$. 

A counter example can be established for a simple scenario in which 
\begin{align*}
    f_1(x) = x^2, f_2(x) = x^2 + c,
\end{align*}
where $x\in \reals,$ for some $c\neq 0$. The two functions are convex, and have a minimizer at $x=0$. However, since the Hessians of the functions are equal, $\nabla^2 f_1(x)= \nabla^2 f_2(x),$ the solution of the optimization problem
\begin{align}
    \label{eq:counter example naive reduction}
    \w_\star(x) \in \argmax_{\w \in \Delta_m} \lambda_{\min} \brac{\sum_{i\in \{1,2\}} w_i \nabla^2 f_i(x)}, 
\end{align}
is arbitrary. Namely, each point on the simplex is a solution of this optimization problem. For example, choosing
\begin{align*}
    \w_\star(x) = (\sin^2(x), \cos^2(x)),
\end{align*}
is a solution of Eq.~\eqref{eq:counter example naive reduction}. With this, the function $f_{\w_\star(x)}$ takes the form of
\begin{align*}
    f_{\w_\star(x)}= x^2 + c \times \cos^2(x),
\end{align*}
which is not a convex function if, for example, $|c|>1$.

Additionally, choosing $\w_\star(x)$ as  non-smooth function, for example
\begin{align*}
    \w_\star(x)=
    \begin{cases}
        (0,1) & x>0\\
        (1,0) & x\leq 0,
    \end{cases}
\end{align*}
results with a non-smooth function $f_{\w_\star(x)}$. This highlights that differentiating the function $f_{\w_\star(x)}$ is flawed from a theoretical perspective:  $f_{\w_\star(x)}$ is not necessarily convex nor smooth.


\section{Preliminaries and Basic Properties}\label{app:pre_and_prp}

In this section, we formally provide our working assumptions. We assume access to multi-objective feedback with $m$ objectives $F(\x) = (f_1(\x),\ldots,f_m(\x))$, when $\forall i \in [m]$ the function $f_i(\x) $ is smooth and self-concordant.  Considering two scenarios AMOO, and  $\epsilon$-AAMOO. In AMOO we assume the functions are aligned in the sense of Eq.~\eqref{eq:aligned_functions}, namely, that they share an optimal solution. In $\epsilon$-AAMOO we assume there is a (non-empty) set of $\epsilon$ approximate solutions in the sense of Eq.~\eqref{eq:Ce_set}, i.e. every solution in the set has a maximum value gap of $\epsilon$ from the minima of every function.

We define the following quantities, for the single and multi objective settings:
\begin{align*}
    &\enorm{\y}^2_{\x} :=\enorm{\y}_{\nabla^2 f(\x)} := \langle \nabla^2 f(\x) \y, \y \rangle \\
    &\enorm{\y}^2_{\x,\w} :=\enorm{\y}_{\nabla^2 f_\w(\x)} := \langle \nabla^2 f_\w(\x) \y, \y \rangle.
\end{align*}
\begin{definition}[Smoothness] \label{ass:smooth}
    $f(\x) : \reals^n \xrightarrow{} \reals$ is called $\beta$-smooth if $~\forall \x, \y \in \reals^n$ the following holds:
    \begin{align}
        f(\y) \leq  f(\x) + \nabla f(\x)^{\top} (\y - \x) + \frac{\beta}{2}  \enorm{ \x - \y } ^2. \nonumber
    \end{align}
\end{definition}

This definition has the following consequence
\begin{lemma}[Standard result, E.g., 9.17~\citet{boyd2004convex}] \label{lemma:smooth-gradient-norm}
    Let $f : \reals^n \to \reals$ a $\beta$-smooth over $\reals^n$, and let $\x_\star \in \argmin_{\x} ~ f(\x)$.  Then, for every $\x \in \reals^n$ it holds that
    \begin{align*}
        \enorm{\nabla f (\x)}^2 \leq 2\beta \brac{f(\x) - f(\x_\star)}.
    \end{align*}
\end{lemma}




\begin{definition}[Self-concordant] \label{ass:self_con}
    $f(\x) : \reals^n \xrightarrow{} \reals$ is called self-concordant with parameter $M_{\mathrm{f}} \geq 0$ if $~\forall \x,\y\in \reals^n$ the following holds:
    \begin{align}
        \inner{\nabla^3 f(\x)[\y]\y}{\y}  \preceq 2M_{\mathrm{f}} \enorm{\y}_{\x}^3, \nonumber
    \end{align}
    where $\nabla^3 f(\x)[\y] := \lim_{\alpha\rightarrow 0} \frac{1}{\alpha}\brac{\frac{\nabla^2 f(\x+\alpha \y) - \nabla^2 f(\x)}{\alpha}}$ is the directional derivative of the hessian in~$\y$.
\end{definition}

This definition has the following important consequences.
\begin{lemma}[Theorem 5.1.8 \& Lemma 5.1.5,~\citet{nesterov2013introductory}]\label{lemma:self_con_consequences}
Let $f:\reals^n \rightarrow \reals$ be an $M_{\mathrm{f}}$ self-concordant function. Let $\x, \y \in \reals^n$ , we have
$$
f(\y) \geq f(\x) + \inner{\nabla f(\x)}{\y  - \x} + \frac{1}{M_{\mathrm{f}}^2} \omega\brac{M_{\mathrm{f}}\enorm{\y-\x}_\x },
$$
where $\omega(t) := t-\ln(1-t)$, and, for any $t>0$, $\omega(t)\geq \frac{t^2}{2(1+t)}$.
\end{lemma}
\begin{lemma}[Theorem 5.1.1, ~\citet{nesterov2018lectures}]\label{lemma:sum_of_self_con}
    Let $f_1,f_2 : \reals^n \to \reals$ be $M_{\mathrm{f}}$ self-concordant functions. Let $w_1,w_2 > 0$. Then, $f=w_1 f_1+ w_2 f_2$ is $M = \max_i \{ \frac{1}{\sqrt{w_i}} \} M_{\mathrm{f}}$ self-concordant function.
\end{lemma}
\begin{restatable}[Weighted sum of self-concordant functions]{proposition}{SumSelfCon} \label{prop:sun_of_self_con}
    Let $\{ f_i: \reals^n \to \reals \}_{i=1}^{n} $ be $M_{\mathrm{f}}$ self-concordant functions. Let $\{w_i > 0\}$. Then, $f= \sum_{i=1}^{n} w_i f_i$ is $M = \max_i \{ \frac{1}{\sqrt{w_i}} \} M_{\mathrm{f}}$ self-concordant function.
\end{restatable}
The proof of the last proposition proof is found in Section \ref{app:missing_proofs}.

\begin{theorem}[Weyl's Theorem]\label{thm:weyls} Let $\A$ and $\Delta$ be symmetric matrices in $\reals^{n \times n}$. Let $\lambda_j(\A)$ be the $j^{th}$ largest eigenvalue of a matrix $\A$. Then, for all $j\in [n]$ it holds that $\| \lambda_j(\A) - \lambda_j(\A+\Delta) \| \leq \| \Delta\|_2$, where $\| \Delta\|_2$ is the spectral norm of $\Delta$.
\end{theorem}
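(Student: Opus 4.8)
The plan is to prove the bound via the Courant–Fischer variational characterization of eigenvalues, which is the standard route to Weyl-type inequalities. Ordering eigenvalues as $\lambda_1(A) \geq \lambda_2(A) \geq \cdots \geq \lambda_d(A)$, recall that for any symmetric $A$ and any index $j \in [d]$,
\[
\lambda_j(A) = \max_{\substack{V \subseteq \mathbb{R}^d \\ \dim V = j}} \,\, \min_{\substack{x \in V \\ \|x\|_2 = 1}} x^\top A x,
\]
where the outer maximum ranges over all $j$-dimensional subspaces $V$. This reduces the problem to controlling the quadratic form $x^\top(A+\Delta)x$ in terms of $x^\top A x$.

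First I would establish the pointwise bound on the Rayleigh quotient of the perturbation: for every unit vector $x$, $|x^\top \Delta x| \leq \|\Delta\|_2$, which is immediate from the definition of the operator norm together with the symmetry of $\Delta$. Consequently, for every unit $x$,
\[
x^\top A x - \|\Delta\|_2 \,\leq\, x^\top (A+\Delta) x \,\leq\, x^\top A x + \|\Delta\|_2.
\]

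Next I would propagate the left-hand inequality through the nested $\min$–$\max$. Since the bound holds uniformly in $x$, taking the minimum over unit $x \in V$ preserves it, and then taking the maximum over $j$-dimensional subspaces $V$ yields
\[
\lambda_j(A+\Delta) \,\geq\, \lambda_j(A) - \|\Delta\|_2.
\]
The only point requiring care is the monotonicity of $\min$ and $\max$ under a uniform additive shift; crucially, no interchange of $\min$ and $\max$ is needed, so the argument stays clean. To obtain the matching upper bound I would apply the same inequality with the roles of the two matrices swapped, writing $A = (A+\Delta) + (-\Delta)$ and using $\|-\Delta\|_2 = \|\Delta\|_2$, which gives $\lambda_j(A+\Delta) \leq \lambda_j(A) + \|\Delta\|_2$. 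Combining the two one-sided bounds delivers $|\lambda_j(A) - \lambda_j(A+\Delta)| \leq \|\Delta\|_2$ for all $j$, as claimed.

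The proof is essentially routine once Courant–Fischer is in hand; there is no genuine obstacle here, only careful bookkeeping of the directions of the inequalities. If a fully self-contained argument were desired, the main effort would instead shift to proving Courant–Fischer itself (e.g.\ via a dimension-counting intersection argument showing that any $j$-dimensional subspace meets the span of the bottom $d-j+1$ eigenvectors nontrivially), but I would simply invoke it as a standard fact.
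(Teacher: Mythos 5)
Your proof is correct: the pointwise Rayleigh-quotient bound $|x^\top \Delta x| \leq \|\Delta\|_2$, propagated through the Courant--Fischer min--max characterization and then symmetrized by writing $A = (A+\Delta) + (-\Delta)$, is exactly the canonical derivation of this eigenvalue-perturbation inequality. The paper itself gives no proof at all --- it invokes the statement as a well-known corollary of Weyl's theorem --- so your argument simply supplies the standard proof that the paper implicitly relies on, with the inequality directions handled correctly throughout.
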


\subsection{Auxiliary Results}

Further, we have the following simple consequence of the AMOO setting.
\begin{lemma}\label{lemma:optimality_of_x_star}
For all $\w\in \Delta_m$ and $\x\in \reals^n$ it holds that $f_\w(\x) - f_\w(\x_\star)\geq 0.$
\end{lemma}
\begin{proof}

    Observe that
    $
        f_\w(\x) - f_\w(\x_\star)= \sum_{i=1}^m w_i \brac{f_i(\x) - f_i(\x_\star)}.
    $
    Since $\x_\star$ is the optimal solution for all objectives it holds that $f_i(\x) - f_i(\x_\star) \geq  0.$ The lemma follows from the fact $w_i\geq 0$ for all $i\in [m].$
\end{proof}

\begin{restatable}[Recurrence bound AMOO]{lemma}{ExactIterationBound}
\label{lemma:exact_iteration_bound}
Let $\{ r_k\}_{k\geq 0}$ be a sequence of non-negative real numbers that satisfy the recurrence relation
\begin{align*}
    r_{k+1}^2 \leq r_k^2 -\alpha_1\frac{r_k^2}{1+ \alpha_2 r_k},
\end{align*}
where $\alpha_1\in [0,2)$ and $\alpha_2\in \reals_+.$ Let $k_0 := \Big\lceil \frac{4 \brac{r_0 \alpha_2 - 1}}{ \alpha_1} \Big\rceil$. Then, $r_k$ is bounded by
    \begin{align*}
        r_k \leq 
        \begin{cases}
            r_{k_0} \brac{1-\frac{\alpha_1}{2}}^\frac{k-k_0}{2} & k\geq k_0\\
            r_0 - \frac{ \alpha_1}{4\alpha_2}k & o.w.
        \end{cases}.
    \end{align*}
\end{restatable}
The proof of the lemma is found in Appendix \ref{app:exact_iteration_bound_proof}\\
The next lemma is for the $\epsilon$-AAMOO setting. We separate them since the recursion and the result of $\epsilon$-AAMOO have more details, making it less readable.

\begin{restatable}[Recurrence bound $\epsilon$-AAMOO]{lemma}{EpsilonIterationBound}\label{lemma:epsilon_app_iteration_bound}
Let $\{ r_k\}_{k\geq 0}$ be a sequence of non-negative real numbers that satisfy the recurrence relation
\begin{align*}
    r_{k+1}^2 \leq r_k^2 -\alpha_1\frac{r_k^2}{1+ \alpha_2 r_k} +\alpha_3 +\alpha_4 r_k,
\end{align*}
where $\alpha_1\in (0,2)$, $\alpha_2\in \reals_+,$ $\alpha_3 \leq \frac{\alpha_1^2}{256\alpha^2_2},$ and $\alpha_4\leq \frac{\alpha_1}{4\alpha_2}$. Let $k_0 := \Big\lceil \frac{16 \brac{r_0 \alpha_2 - 1}}{ \alpha_1} \Big\rceil$. Then, $r_k$ is bounded by
    \begin{align*}
        r_k \leq 
        \begin{cases}
            r_{k_0} \brac{1-\frac{\alpha_1}{2}}^{\frac{k'-k}{2}} + \sqrt{\frac{2\alpha_3}{\alpha_1}+\frac{2\alpha_4}{\alpha_1 \alpha_2}} & k\geq k_0\\
            r_0 - \frac{\alpha_1}{16\alpha_2}k & o.w.
        \end{cases}.
    \end{align*}
\end{restatable}

\section{Proofs of AMOO Results} \label{app:AMOO_results}

See Section~\ref{sec:camoo} for a highlevel description of key steps in the proof. 

We first describe two useful lemmas that are used across the section.

\begin{lemma}[Weighted Function is Self-Concordant]\label{lemma:self_concordant}
For any iteration $k$ of \CAMOO, the function $f_{\w_k}$ is $1/\sqrt{\wmin}M_{\mathrm{f}}$ self-concordant.    
\end{lemma}
\begin{proof}
    This is a direct consequence of Proposition~\ref{prop:sun_of_self_con} and the fact Algorithm~\ref{alg:Weighted-GD} sets the weights by optimizing over a set where the weight vector is lower bounded by $\wmin$.
\end{proof}

\begin{restatable}[Continuity of Minimal Eigenvalue of Hessian]{lemma}{ContOfMinEigenOfHessian}
\label{lemma:wmin_minimal_eigenvalue}
Let $\x\in \mathbb{R}^n$. Further, let $\w_\star \in \argmax\limits_{\w\in \Delta_{m}} \lambda_{\min} \brac{\nabla^2 f_{\w}(\x)}$, $\widehat{\w} \in \argmax\limits_{\w\in \Delta_{m,\wmin}} \lambda_{\min} \brac{\nabla^2 f_{\w}(\x)}$. It holds that $\lambda_{\min}\brac{\nabla^2 f_{\widehat{\w}}(\x)}\geq \lambda_{\min}\brac{\nabla^2 f_{\w_\star}(\x)} - 2m\wmin\beta$.
\end{restatable}

With these two results, we are ready to prove Theorem~\ref{thm:exact_amoo_convergence} and Theorem~\ref{thm:exact_pamoo_convergence}.\\
\subsection{Proof of Theorem~\ref{thm:exact_amoo_convergence}}
Restate it first:
\ExactAmooConvergence*

\begin{proof}
At each iteration Algorithm~\ref{alg:Weighted-GD} gets $\w_k \in\argmax\limits_{\w\in \Delta_{m,w_{\min}}} \lambda_{\min} \brac{\sum_{i} w_i \nabla^2 f_i(\x_{k})}$. Using the assumption that $\max\limits_{\w\in \Delta_m}\lambda_{\min}\brac{\nabla^2 f_{\w_k}(\x_k)} \geq \muglobal$, Lemma~\ref{lemma:wmin_minimal_eigenvalue}, and since we set $w_{\min}=\muglobal/\brac{8m\beta}$ we have that
\begin{align}
    \lambda_{\min} \brac{\nabla^2f_{\w_k}}\geq \lambda_{\min}\brac{\nabla^2f_{\w}} - 2m\wmin\beta \geq \muglobal - \muglobal/4 = (3/4)\muglobal , \label{eq:strongly_convex_param_of_AMOOO}
\end{align}
for all iterations $t$.
Recall that the update rule is given by $\x_{k+1} = \x_k - \eta\nabla f_{\w_k}(\x_k)$, where $\eta$ is the step size.
Then, for every $\x \in \reals^n$ we have
\begin{align}
    \enorm{\x_{k+1}-\x}^2 &= \enorm{\x_k-\eta\nabla f_{\w_k}(\x_k)-\x}^2 \nonumber \\
    &=\enorm{\x_k-\x}^2 - 2\eta\inner{\nabla f_{\w_k}(\x_k)}{\x_k - \x} + \eta^2\enorm{\nabla f_{\w_k}(\x_k)}^2 \label{eq:camoo_gd_recursion}.
\end{align}
By Lemma~\ref{lemma:self_concordant} it holds that $f_{\w_k}$ is 
\begin{align}
    \widehat{M_{\mathrm{f}}} :=1/\sqrt{\wmin} M_{\mathrm{f}}\leq 3\sqrt{m\beta}M_{\mathrm{f}}/\sqrt{\muglobal}\label{eq:self_concordant_param_of_sum}
\end{align}
self-concordant. Then, from Lemma~\ref{lemma:self_con_consequences}, by properties of self-concordant functions, for every $\x \in \reals^n$ we have
\begin{align*}
      \inner{\nabla f_{\w_k}(\x_k)}{\x_k-\x } \geq f_{\w_k}(\x_k) -f_{\w_k}(\x) + \frac{1}{\widehat{M_{\mathrm{f}}}^2} \omega\brac{\widehat{M_{\mathrm{f}}}\enorm{\x - \x_k}_{\x_k,\w_k} },
\end{align*}
Plugging this inequality into Eq.~\eqref{eq:camoo_gd_recursion} implies that
\begin{align}
    \enorm{\x_{k+1}-\x}^2 &\leq \enorm{\x_k-\x}^2 - 2\eta\brac{f_{\w_k}(\x_k) -f_{\w_k}(\x) + \frac{1}{\widehat{M_{\mathrm{f}}}^2} \omega\brac{ \widehat{M_{\mathrm{f}} } \enorm{\x - \x_k}_{\x_k,\w_k} }} + \eta^2\enorm{\nabla f_{\w_k}(\x_k)}^2. \label{eq:camoo_dist_from_x}
\end{align}
Recall that $\x_\star \in \argmin_{\x} f_i(\x)$ for every $i \in [m]$. Since $f_{\w_k}$ is $\beta$-smooth, by using Lemma~\ref{lemma:smooth-gradient-norm}, and plugging in $\x_\star$ we have
\begin{align*}
    &\enorm{\x_{k+1}-\x_\star}^2 \\
    &\leq \enorm{\x_k-\x_\star}^2 - 2\eta\brac{f_{\w_k}(\x_k) -f_{\w_k}(\x_\star) + \frac{1}{\widehat{M_{\mathrm{f}}}^2} \omega\brac{\widehat{M_{\mathrm{f}}}\enorm{\x_\star - \x_k}_{\x_k,\w_k} }} + 2 \beta \eta^2 \brac{f_{\w_k}(\x_k) -f_{\w_k}(\x_\star)} \\
    & =  \enorm{\x_k-\x_\star}^2 - 2\eta \frac{1}{\widehat{M_{\mathrm{f}}}^2} \omega\brac{\widehat{M_{\mathrm{f}}}\enorm{\x_\star - \x_k}_{\x_k,\w_k} } + 2\eta (\beta \eta - 1) \brac{f_{\w_k}(\x_k) -f_{\w_k}(\x_\star)}.
\end{align*}
By using Lemma~\ref{lemma:optimality_of_x_star} it holds that $f_{\w_k}(\x_k) -f_{\w_k}(\x_\star)\geq 0$, and since $0 < \eta \leq 1/2\beta$, it holds that $2\eta (\beta \eta - 1) \brac{f_{\w_k}(\x_k) -f_{\w_k}(\x_\star)} \leq 0$. Then, by using the lower bound from Lemma~\ref{lemma:self_con_consequences}, i.e. $\omega\brac{t} \geq \frac{t^2}{2\brac{1+t}}$, the following holds
\begin{align*}
    \enorm{\x_{k+1}-\x_\star}^2 &\leq \enorm{\x_k-\x_\star}^2 -2\eta \frac{\enorm{\x_\star - \x_k}_{\x_k,\w_k}^2}{1+\widehat{M_{\mathrm{f}}}\enorm{\x_\star - \x_k}_{\x,\w_k}}.
\end{align*}
Note that $\lambda_{\min}\brac{\nabla^2 f_{\w_k}(\x_k)} \enorm{\x_\star - \x_k}^2 \leq \enorm{\x_\star - \x_k}_{\x_k,\w_k}^2 \leq \lambda_{\max}\brac{\nabla^2 f_{\w_k}(\x_k)} \enorm{\x_\star - \x_k}^2$. By using the following: $\lambda_{\min}\brac{\nabla^2 f_{\w_k}(\x_k)} \geq (3/4)\muglobal$ (Eq.~\eqref{eq:strongly_convex_param_of_AMOOO}), $\lambda_{\max}\brac{\nabla^2 f_{\w_k}(\x_k)} \leq \beta$ (smoothness), $\widehat{M_{\mathrm{f}}} \leq 3\sqrt{m\beta}M_{\mathrm{f}} /\sqrt{\muglobal}$ (Eq.~\eqref{eq:self_concordant_param_of_sum}), and $\eta = 1/2\beta$, we obtain
\begin{align*}
    \enorm{\x_{k+1}-\x_\star}^2 &\leq \enorm{\x_k-\x_\star}^2 -\frac{3 \muglobal}{4 \beta} \frac{\enorm{\x_\star - \x_k}^2}{1+\brac{3\sqrt{m}\beta M_{\mathrm{f}}/\sqrt{\muglobal}}\enorm{\x_\star - \x_k}}.
\end{align*}
Now, we are ready for the last step. Denote $\alpha_1 = \frac{3\muglobal}{4\beta}$, and $\alpha_2 = \frac{3\sqrt{m}\beta M_{\mathrm{f}}}{\sqrt{\muglobal}}$. Since $\alpha_1 \in (0,1]$, $\alpha_2 \in \reals_+$, and $\enorm{\x-\x_\star} \in \reals_+$ for every $\x \in \reals^n$, we arrive to the recurrence relation analyzed in Lemma~\ref{lemma:exact_iteration_bound}. Then, we obtain
    \begin{align*}
        \enorm{\x_{k} - \x_\star} \leq 
        \begin{cases}
            \enorm{\x_{k_0} - \x_\star} \brac{1-\frac{3\muglobal}{8\beta}}^{(k-k_0)/2}  & k\geq k_0\\
            \enorm{\x_0 - \x_\star}- k \frac{\muglobal^{3/2}}{16 \beta^2 \sqrt{m} M_{\mathrm{f}}} & o.w.
        \end{cases}
    \end{align*}
where $k_0 := \left\lceil \frac{16 \beta  \brac{\enorm{\x_0 - \x_\star} 3\sqrt{m}\beta M_{\mathrm{f}} -\sqrt{\muglobal}} }{ 3\muglobal^{3/2}}\right\rceil$.
\end{proof}

\subsection{Proof of Theorem~\ref{thm:exact_pamoo_convergence}}
Restate it first:
\ExactPAmooConvergence*

\begin{proof}
Recall that for every $\w\in \reals^m_+$ it holds that $f_{\w}$ is a convex function. Hence, for every $\x,\y \in \reals^n$ it holds that
\begin{align*}
    &f_{\w}(\x) - f_{\w}(\y)  \leq \nabla f_\w(\x)^T(\x-\y). 
\end{align*}
Recall that the step size $\eta = 1$, then the update rule is given by $\x_{k+1} = \x_k - \nabla f_{\w_k}(\x_k)$. Then, by using the previous equation, for every $\x \in \reals^n$ we have
\begin{align}
    \enorm{\x_{k+1}-\x}^2 &=\enorm{\x_k-\x}^2 - 2\inner{\nabla f_{\w_k}(\x_k)}{\x_k - \x} + \enorm{\nabla f_{\w_k}(\x_k)}^2 \nonumber\\
    &\leq \enorm{\x_k-\x}^2 - 2 \brac{f_{\w_k}(\x_k)- f_{\w_k} (\x)} +\enorm{\nabla f_{\w_k}(\x_k)}^2 , \label{eq:PAMOO_dist_from_x}
\end{align}
Recall that $\x_\star \in \argmin_{\x \in \reals^n} f_i(\x)$ for every $i \in [m]$. Since the update rule of \texttt{PAMOO} is 
$$
\w_k\in \argmax_{\w\in \reals^m_+} 2\brac{f_{\w_k}(\x_k)- f_{\w_k} (\x_\star)} - \enorm{\nabla f_{\w_k}(\x_k)}^2,
$$
the following holds
\begin{align}
    \enorm{\x_{k+1}-\x_\star}^2 &\leq \enorm{\x_k-\x_\star}^2 -  \max_{\w \in \reals_+^m} \Big{\{} 2\brac{f_{\w}(\x_k)- f_{\w} (\x_\star)} - \enorm{\nabla f_{\w}(\x_k)}^2 \Big{\}}, \label{eq:PAMOO_dist_from_opt}
\end{align}
Denote $\w_\star = \argmax\limits_{\w\in \Delta_{m,\wmin}} \lambda_{\min}\brac{\sum_{i=1}^m w_i \nabla^2 f_i(\x_\star)}$, $a_k = f_{\w_\star}(\x_k)- f_{\w_\star} (\x_\star)$, and $b_k = \enorm{\nabla f_{\w_\star}(\x_k)}^2$. Let $w(\x_k) = \w_\star \frac{a_k}{b_k} \in \reals^m_+$, we can lower bound of the last expression as follows
\begin{align*}
    \max_{\w\in \reals_+^m} \left[ 2\brac{f_{\w}(\x_k)- f_{\w} (\x_\star)} - \enorm{\nabla f_{\w}(\x_k)}^2 \right] & \geq 2\brac{f_{\w(\x_k)}(\x_k)- f_{\w(\x_k)} (\x_\star)}  -\enorm{\nabla f_{\w(\x_k)}(\x_k)}^2 \\
    & =   \frac{\brac{f_{\w_\star}(\x_k)- f_{\w_\star} (\x_\star)}^2}{\enorm{\nabla f_{\w_\star}(\x_k)}^2}.
\end{align*}
Since $\w_\star\in \Delta_{m,\wmin}$ it holds that $f_{\w_\star}$ is $\beta$ smooth. Then, it holds that $\enorm{\nabla f_{\w_\star}(\x)}^2 \leq 2\beta \brac{f_{\w_\star}(\x)- f_{\w_\star} (\x_\star)}$ for every $\x$, and we have
\begin{align*}
     \max_{\w\in \reals_+^m} \left[ 2\brac{f_{\w}(\x_k)- f_{\w} (\x_\star)} - \enorm{\nabla f_{\w}(\x_k)}^2 \right] & \geq \frac{f_{\w_\star}(\x_k)- f_{\w_\star} (\x_\star)}{2 \beta} 
\end{align*}
Plugging this in Eq.~\eqref{eq:PAMOO_dist_from_opt}, we arrive to 
\begin{align*}
    \enorm{\x_{k+1}-\x_\star}^2 &\leq \enorm{\x_k-\x_\star}^2 -  \frac{ f_{\w_\star}(\x_k)- f_{\w_\star} (\x_\star)}{2 \beta}
\end{align*}
By Lemma~\ref{lemma:self_concordant} it holds that $f_{\w_\star}$ is $\widehat{M_{\mathrm{f}}}:=1/\sqrt{\wmin} M_{\mathrm{f}}$ self concordant. Then, From Lemma~\ref{lemma:self_con_consequences}, and its lower bound , i.e. $\omega\brac{t} \geq \frac{t^2}{2\brac{1+t}}$, we have
\begin{align*}
       f_{\w_\star}(\x_k) -f_{\w_\star}(\x_\star) \geq \inner{\nabla f_{\w_\star}(\x_\star)}{\x_k - \x_\star} +  \frac{\enorm{\x_k - \x_\star}_{ \x_\star,\w_\star}^2}{2\brac{1+\widehat{M_{\mathrm{f}}}\enorm{\x_k - \x_\star}_{\x_\star,\w_\star}}} = \frac{\enorm{\x_k - \x_\star}_{ \x_\star,\w_\star}^2}{2\brac{1+\widehat{M_{\mathrm{f}}}\enorm{\x_k - \x_\star}_{\x_\star,\w_\star}}}.
\end{align*}
The equality is due to the optimality condition, $\nabla f_i(\x_\star)=0$ for every $i \in [m]$, thus, $\nabla f_{\w_\star}(\x_\star) = \bold{0}$.
Combining the last two equations, we have 
\begin{align*}
    \enorm{\x_{k+1}-\x_\star}^2 &\leq \enorm{\x_k-\x_\star}^2 -  \frac{1}{4 \beta} \frac{\enorm{\x_k - \x_\star}_{ \x_\star,\w_\star}^2}{1+\widehat{M_{\mathrm{f}}}\enorm{\x_k - \x_\star}_{\x_\star,\w_\star}}
\end{align*}
Note that $\lambda_{\min}\brac{\nabla^2 f_{\w_\star}(\x_\star)} \enorm{\x_\star - \x_k}^2 \leq \enorm{\x_\star - \x_k}_{\x_\star,\w_\star}^2 \leq \lambda_{\max}\brac{\nabla^2 f_{\w_\star}(\x_\star)} \enorm{\x_\star - \x_k}^2$. Since $\lambda_{\max}\brac{\nabla^2 f_{\w_\star}(\x_\star)} \leq \beta$ (smoothness), and since $w_{\min}= \mulocal /(8m\beta)$, then $\widehat{M_{\mathrm{f}}} \leq 3\sqrt{m\beta}M_{\mathrm{f}} /\sqrt{\mulocal}$. Thus, it holds that
\begin{align*}
    \enorm{\x_{k+1}-\x_\star}^2 &\leq \enorm{\x_k-\x_\star}^2 -  \frac{1}{4 \beta} \frac{\lambda_{\min}\brac{\nabla^2 f_{\w_\star}(\x_\star)} \enorm{\x_\star - \x_k}^2}{1+ \frac{3\sqrt{m} M_{\mathrm{f}} \beta}{\sqrt{\mulocal}}  \enorm{\x_k - \x_\star}}
\end{align*}
Using the assumption that $\max\limits_{\w\in \Delta_m}\lambda_{\min}\brac{\nabla^2 f_{\w}(\x_\star)} \geq \mulocal$, Lemma~\ref{lemma:wmin_minimal_eigenvalue}, we have that
\begin{align*}
    \lambda_{\min} \brac{\nabla^2f_{\w_\star}(\x_\star)}\geq \max\limits_{\w\in \Delta_m}\lambda_{\min}\brac{\nabla^2 f_{\w}(\x_\star)} - 2m\wmin\beta \geq \mulocal - \mulocal/4 = (3/4)\mulocal.
\end{align*}
Finally, we obtain the recurring equation we wish:
\begin{align*}
    \enorm{\x_{k+1}-\x_\star}^2 &\leq \enorm{\x_k-\x_\star}^2 -  \frac{3 \mulocal}{16 \beta} \frac{ \enorm{\x_\star - \x_k}^2}{1+ \frac{3\sqrt{m} M_{\mathrm{f}} \beta}{\sqrt{\mulocal}}  \enorm{\x_k - \x_\star}}
\end{align*} 
Denote $\alpha_1 = \frac{3\mulocal}{16\beta}$, and $\alpha_2 = \frac{3\sqrt{m}\beta M_{\mathrm{f}}}{\sqrt{\mulocal}}$. Note that $\alpha_1 \in (0,1]$, $\alpha_2 \in \reals_+$, and $\enorm{\x-\x_\star} \in \reals_+$ for every $\x \in \reals^n$, we arrive to the recurrence relation analyzed in Lemma~\ref{lemma:exact_iteration_bound}. Then, we obtain
\begin{align*}
    \enorm{\x_{k} - \x_\star} \leq 
    \begin{cases}
        \enorm{\x_{k_0} - \x_\star} \brac{1-\frac{3\mulocal}{32\beta}}^{(k-k_0)/2}  & k\geq k_0\\
        \enorm{\x_0 - \x_\star}- k \frac{\mulocal^{3/2}}{64 \beta^2 \sqrt{m} M_{\mathrm{f}}} & o.w.
    \end{cases}
\end{align*}
where $k_0 := \left\lceil \frac{64 \beta  \brac{\enorm{\x_0 - \x_\star} 3\sqrt{m}\beta M_{\mathrm{f}} -\sqrt{\mulocal}} }{ 3\mulocal^{3/2}}\right\rceil$.
\end{proof}

\section{$\epsilon$-Approximation Solution}\label{app:epsilon_app_sol}


In this section, we represent the formal theorems of the Informal Theorem~\ref{thm:informal_epsilon_AAMOO} for \CAMOO ~and \PAMOO. First, we
rewrite the definition of $\epsilon$- approximate solutions set.

\begin{definition}[$\epsilon$-Approximate Solution Set] \label{def:epsilon_app_solution_set}
     Let $\epsilon \geq 0$. $\mC_\epsilon$ is $\epsilon$-Approximate Solution Set ($\epsilon$-ASS) if for every $i \in [m]$ it holds that $f_i(\x) - f_i(\x_{\star}^i) \leq \epsilon$, i.e.
    \begin{align*}
        \mC_\epsilon = \{ \x \in \reals^n | \ f_i(\x) - f_i(\x_{\star}^i) \leq \epsilon ~~ \forall i\in[m] \},
    \end{align*}
    where $\x_\star^i \in \argmin_{\x \in \reals^n} \{ f_i(\x) \}$.
\end{definition}

We show in the following formal theorem that for $\epsilon>0$ \CAMOO ~converges for any chosen point from the set $\mC_\epsilon$, i.e. the $\epsilon$- approximate solutions set.

\begin{restatable}[$\muglobal$ Approximation Convergence of \CAMOO]{theorem}{EpsilonCAmooConvergence}
\label{thm:epsilon_camoo_convergence}
        Suppose $\{f_i\}_{i\in [m]}$ are $\beta$ smooth, $M_{\mathrm{f}}$ self-concordant, $\mC_\epsilon$ is an $\epsilon$-ASS and that $\muglobal > 0$. Let $k_0 := \left\lceil \frac{64 \beta  \brac{\frac{3 \sqrt{m}\beta M_{\mathrm{f}} \enorm{\x_0 - \xse} }{\sqrt{\muglobal}}-1} }{ 3\muglobal}\right\rceil$, where $\enorm{\cdot}$ is the 2-norm. Let $ \frac{\muglobal^3 }{4^6 m \beta^3 M_{\mathrm{f}}^2} \geq \epsilon > 0$. Then, for every point $\xse \in \mC_\epsilon$, \texttt{Weighted-GD} instantiated with \CAMOO\ weight-optimizer and $\eta = 1/2\beta$ converges with rate:
    \begin{align*}
        \enorm{\x_{k} - \xse} \leq 
        \begin{cases}
            \enorm{\x_{k_0} - \xse} \brac{1-\frac{3\muglobal}{8\beta}}^{(k-k_0)/2} + \sqrt{\frac{8  }{ 3 \muglobal}\epsilon} & k\geq k_0\\
            \enorm{\x_0 - \xse}- k \frac{\muglobal^{3/2}}{4^3 \beta^2 \sqrt{m} M_{\mathrm{f}}} & o.w.
        \end{cases}
    \end{align*}
\end{restatable}

\begin{proof}
Let $\xse \in \mC_\epsilon$. Denote $\x_k^\star = \argmin_{\x\in\reals^n} f_{\w_k}(\x)$ for every $k$, then since $f_{\w_k}$ is $\beta$-smooth, by using Lemma~\ref{lemma:smooth-gradient-norm}, it holds that $\enorm{\nabla f (\x)}^2 \leq 2\beta \brac{f(\x) - f(\x_\star)}$. Plugging in $\x = \xse$ in Eq.~\eqref{eq:camoo_dist_from_x}, we have 
\begin{align*}
    &\enorm{\x_{k+1}-\xse}^2 \\
    &\leq \enorm{\x_k-\xse}^2 - 2\eta\brac{f_{\w_k}(\x_k) -f_{\w_k}(\xse) + \frac{1}{\widehat{M_{\mathrm{f}}}^2} \omega\brac{\widehat{M_{\mathrm{f}}}\enorm{\xse - \x_k}_{\x_k,\w_k} }} + 2 \beta \eta^2 \brac{f_{\w_k}(\x_k) -f_{\w_k}(\x_k^\star)} \\
    & =  \enorm{\x_k-\xse}^2 - 2\eta \frac{1}{\widehat{M_{\mathrm{f}}}^2} \omega\brac{\widehat{M_{\mathrm{f}}}\enorm{\xse - \x_k}_{\x_k,\w_k} } \\
    &\quad + 2\eta \brac{f_{\w_k}(\xse) -f_{\w_k}(\x_k^\star)} + 2\eta (\beta \eta - 1) \brac{f_{\w_k}(\x_k) -f_{\w_k}(\x_k^\star)}.
\end{align*}
Since $f_{\w_k}(\x_k) -f_{\w_k}(\x_k^\star)\geq 0$, and since $0 < \eta \leq 1/2\beta$, it holds that $2\eta (\beta \eta - 1) \brac{f_{\w_k}(\x_k) -f_{\w_k}(\x_k^\star)} \leq 0$. In addition, by using the lower bound from Lemma~\ref{lemma:self_con_consequences}, i.e. $\omega\brac{t} \geq \frac{t^2}{2\brac{1+t}}$, and Definition~\ref{def:epsilon_app_solution_set} with the fact that $\sum_i w_i = 1$, the following holds
\begin{align*}
    \enorm{\x_{k+1}-\xse}^2 &\leq \enorm{\x_k-\xse}^2 -2\eta \frac{\enorm{\xse - \x_k}_{\x_k,\w_k}^2}{1+\widehat{M_{\mathrm{f}}}\enorm{\xse - \x_k}_{\x,\w_k}} + 2\eta\epsilon.
\end{align*}
Note that $\lambda_{\min}\brac{\nabla^2 f_{\w_k}(\x_k)} \enorm{\xse - \x_k}^2 \leq \enorm{\xse - \x_k}_{\x_k,\w_k}^2 \leq \lambda_{\max}\brac{\nabla^2 f_{\w_k}(\x_k)} \enorm{\xse - \x_k}^2$. By using the following: $\lambda_{\min}\brac{\nabla^2 f_{\w_k}(\x_k)} \geq (3/4)\muglobal$ (Eq.~\eqref{eq:strongly_convex_param_of_AMOOO}, $\lambda_{\max}\brac{\nabla^2 f_{\w_k}(\x_k)} \leq \beta$ (smoothness), $\widehat{M_{\mathrm{f}}} \leq 3\sqrt{m\beta}M_{\mathrm{f}} /\sqrt{\muglobal}$ (Eq.~\eqref{eq:self_concordant_param_of_sum}), and $\eta = 1/2\beta$, we obtain
\begin{align*}
    \enorm{\x_{k+1}-\xse}^2 &\leq \enorm{\x_k-\xse}^2 -\frac{3 \muglobal}{4 \beta} \frac{\enorm{\xse - \x_k}^2}{1+\brac{3\sqrt{m}\beta M_{\mathrm{f}}/\sqrt{\muglobal}}\enorm{\xse - \x_k}} + \frac{\epsilon}{\beta}.
\end{align*}
Now, we are ready for the last step. Denote $\alpha_1 = \frac{3\muglobal}{4\beta}$, $\alpha_2 = \frac{3\sqrt{m}\beta M_{\mathrm{f}}}{\sqrt{\muglobal}}$, and $\alpha_3 = \frac{\epsilon}{\beta}$. Since $\alpha_1 \in (0,2)$, $\alpha_2 \in \reals_+$, $\alpha_3 \leq \frac{\alpha_1^2}{256\alpha^2_2}$ and $\enorm{\x-\xse} \in \reals_+$ for every $\x \in \reals^n$, we arrive to the recurrence relation analyzed in Lemma~\ref{lemma:epsilon_app_iteration_bound}. Then, we obtain
    \begin{align*}
        \enorm{\x_{k} - \xse} \leq 
        \begin{cases}
            \enorm{\x_{k_0} - \xse} \brac{1-\frac{3\muglobal}{8\beta}}^{(k-k_0)/2} + \sqrt{\frac{8  }{ 3 \muglobal}\epsilon} & k\geq k_0\\
            \enorm{\x_0 - \xse}- k \frac{\muglobal^{3/2}}{4^3 \beta^2 \sqrt{m} M_{\mathrm{f}}} & o.w.
        \end{cases}
    \end{align*}
where $k_0 := \left\lceil \frac{64 \beta  \brac{\frac{3 \sqrt{m}\beta M_{\mathrm{f}} \enorm{\x_0 - \xse} }{\sqrt{\muglobal}}-1} }{ 3\muglobal}\right\rceil$.
\end{proof}

Before we continue to \PAMOO, we recall Definition~\ref{def:mulocaleps} which defines $\mulocaleps$. This parameter is the maximum curvature over the $\epsilon$-approximate solutions set, which can be much greater than $\muglobal$, and $\mulocal$. Before we show the formal theorem of \PAMOO, we define $\widehat{\xse} \in \mC_\epsilon$ which is the point with the maximum curvature, s.t.
\begin{align*}
    \widehat{\xse} = \argmax_{\xse \in \mC_\epsilon} \max_{\w \in \Delta_m} \lambda_{\min}\left(\sum_{i=1}^m w_i \nabla^2 f_i(\xse) \right), ~~~~~ \mulocaleps =  \max_{\w \in \Delta_m} \lambda_{\min}\left(\sum_{i=1}^m w_i \nabla^2 f_i(\widehat{\xse}) \right).
\end{align*}
Now, we show the formal theorem that for $\epsilon>0$ \PAMOO ~converges to $\widehat{\xse}$.
\begin{restatable}[$\mulocaleps$ Approximation Convergence of \PAMOO]{theorem}{EpsilonPAmooConvergence}
\label{thm:epsilon_pamoo_convergence}
    Suppose $\{f_i\}_{i\in [m]}$ are $\beta$ smooth, $M_{\mathrm{f}}$ self-concordant, $\mC_\epsilon$ is an $\epsilon$-ASS and that $\mulocaleps > 0$, where $\min \big\{ \frac{\beta}{8} , \frac{1}{2 \beta} \big\} \frac{\mulocaleps^3}{4^4 \beta^4 m M_{\mathrm{f}}^2}\geq \epsilon > 0$. Let $k_0 := \left\lceil \frac{4^4  \beta  \brac{\enorm{\x_0 - \widehat{\xse}} \frac{3\sqrt{m}\beta M_{\mathrm{f}}}{\sqrt{\mulocaleps }} -1} }{ 3\mulocaleps}\right\rceil$ where $\enorm{\cdot}$ is the Euclidean-norm. Then, \texttt{Weighted-GD} instantiated with \PAMOO\ weight-optimizer and $\eta = 1$  converges to $\widehat{\xse}$ with rate:
    \begin{align*}
        \enorm{\x_{k} - \widehat{\xse}} \leq 
        \begin{cases}
            \enorm{\x_{k_0} - \widehat{\xse}} \brac{1-\frac{3\mulocaleps}{32\beta}}^\frac{k-k_0}{2} + \sqrt{\frac{16\epsilon}{3\mulocaleps}+\frac{32 \sqrt{2\beta \epsilon}}{9 \sqrt{\mulocal m} M_{\mathrm{f}}}}  & k\geq k_0\\
            \enorm{\x_0 - \widehat{\xse}}- k \frac{\mulocal^{3/2}}{64 \beta^2 \sqrt{m} M_{\mathrm{f}}} & o.w.
        \end{cases}
    \end{align*}
\end{restatable}

\begin{proof}
Combining Eq.~\eqref{eq:PAMOO_dist_from_x} with the update rule of \texttt{PAMOO}, and plugging in $\widehat{\xse}$, i.e. 
$$
\w_k\in \argmax_{\w\in \reals^m_+} 2\brac{f_{\w_k}(\x_k)- f_{\w_k} (\widehat{\xse})} - \enorm{\nabla f_{\w_k}(\x_k)}^2,
$$
the following holds
\begin{align}
    \enorm{\x_{k+1}-\widehat{\xse}}^2 &\leq \enorm{\x_k-\widehat{\xse}}^2 -  \max_{\w \in \reals_+^m} \Big{\{} 2\brac{f_{\w}(\x_k)- f_{\w} (\widehat{\xse})} - \enorm{\nabla f_{\w}(\x_k)}^2 \Big{\}} \label{eq:epsilon_PAMOO_dist_from_opt}
\end{align}
Denote $\wse = \argmax\limits_{\w\in \Delta_{m,\wmin}} \lambda_{\min}\brac{\sum_{i=1}^m w_i \nabla^2 f_i(\widehat{\xse})}$, $a_k = f_{\wse}(\x_k)- f_{\wse} (\widehat{\xse})$, and $b_k = \enorm{\nabla f_{\wse}(\x_k)}^2$. Let $w(\x_k) = \wse \frac{a_k}{b_k} \in \reals^m_+$, we can lower bound of the last expression as follows
\begin{align*}
    \max_{\w\in \reals_+^m} \left[ 2\brac{f_{\w}(\x_k)- f_{\w} (\widehat{\xse})} - \enorm{\nabla f_{\w}(\x_k)}^2 \right] & \geq 2\brac{f_{\w(\x_k)}(\x_k)- f_{\w(\x_k)} (\widehat{\xse})}  -\enorm{\nabla f_{\w(\x_k)}(\x_k)}^2 \\
    & =   \frac{\brac{f_{\wse}(\x_k)- f_{\wse} (\widehat{\xse})}^2}{\enorm{\nabla f_{\wse}(\x_k)}^2}.
\end{align*}
Denote $\x_\star \in \argmin_\x f_{\wse}(\x) $. Then, it holds that 
\begin{align*}
    &\frac{\brac{f_{\wse}(\x_k)- f_{\wse} (\widehat{\xse})}^2}{\enorm{\nabla f_{\wse}(\x_k)}^2} \\
    & =   \frac{\brac{f_{\wse}(\x_k)- f_{\wse} (\x_\star) + f_{\wse}(\x_\star)- f_{\wse} (\widehat{\xse})}\brac{f_{\wse}(\x_k)- f_{\wse} (\widehat{\xse})}}{\enorm{\nabla f_{\wse}(\x_k)}^2}\\
    & =   \frac{\brac{f_{\wse}(\x_k)- f_{\wse} (\x_\star)}\brac{f_{\wse}(\x_k)- f_{\wse} (\widehat{\xse})}}{\enorm{\nabla f_{\wse}(\x_k)}^2} + \frac{\brac{ f_{\wse}(\x_\star)- f_{\wse} (\widehat{\xse})}\brac{f_{\wse}(\x_k)- f_{\wse} (\widehat{\xse})}}{\enorm{\nabla f_{\wse}(\x_k)}^2}\\
    & =   \frac{\brac{f_{\wse}(\x_k)- f_{\wse} (\x_\star)}\brac{f_{\wse}(\x_k)- f_{\wse} (\widehat{\xse})}}{\enorm{\nabla f_{\wse}(\x_k)}^2} + \frac{\brac{ f_{\wse}(\x_\star)- f_{\wse} (\widehat{\xse})}\brac{f_{\wse}(\x_k)- f_{\wse} (\x_\star)}}{\enorm{\nabla f_{\wse}(\x_k)}^2} \\
    & ~~~~ + \frac{\brac{ f_{\wse}(\x_\star)- f_{\wse} (\widehat{\xse})}^2}{\enorm{\nabla f_{\wse}(\x_k)}^2}\\
    & \geq \frac{\brac{f_{\wse}(\x_k)- f_{\wse} (\x_\star)}\brac{f_{\wse}(\x_k)- f_{\wse} (\widehat{\xse})}}{\enorm{\nabla f_{\wse}(\x_k)}^2} + \frac{\brac{ f_{\wse}(\x_\star)- f_{\wse} (\widehat{\xse})}\brac{f_{\wse}(\x_k)- f_{\wse} (\x_\star)}}{\enorm{\nabla f_{\wse}(\x_k)}^2}.
\end{align*}
Since $\wse\in \Delta_{m,\wmin}$ it holds that $f_{\wse}$ is $\beta$ smooth. Then, it holds that $\enorm{\nabla f_{\wse}(\x)}^2 \leq 2\beta \brac{f_{\wse}(\x)- f_{\wse} (\x_\star)}$ for every $\x$, and we have
\begin{align*}
     \max_{\w\in \reals_+^m} \left[ 2\brac{f_{\w}(\x_k)- f_{\w} (\widehat{\xse})} - \enorm{\nabla f_{\w}(\x_k)}^2 \right] & \geq \frac{f_{\wse}(\x_k)- f_{\wse} (\widehat{\xse})}{2 \beta} + \frac{ f_{\wse}(\x_\star)- f_{\wse} (\widehat{\xse})}{2\beta}.
\end{align*}
Plugging this in Eq.~\eqref{eq:epsilon_PAMOO_dist_from_opt}, we arrive to 
\begin{align*}
    \enorm{\x_{k+1}-\widehat{\xse}}^2 &\leq \enorm{\x_k-\widehat{\xse}}^2 -  \frac{ f_{\wse}(\x_k)- f_{\wse} (\widehat{\xse})}{2 \beta} + \frac{ f_{\wse} (\widehat{\xse}) - f_{\wse}(\x_\star)}{2\beta}\\
    &\leq \enorm{\x_k-\widehat{\xse}}^2 -  \frac{ f_{\wse}(\x_k)- f_{\wse} (\widehat{\xse})}{2 \beta} + \frac{ \epsilon}{2\beta}.
\end{align*}
By Lemma~\ref{lemma:self_concordant} it holds that $f_{\w_\star}$ is $\widehat{M_{\mathrm{f}}}:=1/\sqrt{\wmin} M_{\mathrm{f}}$ self concordant. Then, From Lemma~\ref{lemma:self_con_consequences}, and its lower bound , i.e. $\omega\brac{t} \geq \frac{t^2}{2\brac{1+t}}$, we have
\begin{align*}
       f_{\wse}(\x_k) -f_{\wse}(\widehat{\xse}) & \geq \inner{\nabla f_{\wse}(\widehat{\xse})}{\x_k - \widehat{\xse}} +  \frac{\enorm{\x_k - \widehat{\xse}}_{ \widehat{\xse},\wse}^2}{2\brac{1+\widehat{M_{\mathrm{f}}}\enorm{\x_k - \widehat{\xse}}_{\widehat{\xse},\wse}}} \\
       & \geq -\enorm{\nabla f_{\wse}(\widehat{\xse})} \enorm{\x_k - \widehat{\xse}} +  \frac{\enorm{\x_k - \widehat{\xse}}_{ \widehat{\xse},\wse}^2}{2\brac{1+\widehat{M_{\mathrm{f}}}\enorm{\x_k - \widehat{\xse}}_{\widehat{\xse},\wse}}} .
\end{align*}
By Lemma~\ref{lemma:smooth-gradient-norm}, we have that $\enorm{\nabla f_{\wse}(\widehat{\xse})}^2 \leq 2\beta \brac{f_{\wse}(\widehat{\xse})- f_{\wse} (\x_\star)}\leq 2\beta \epsilon$, and, hence, $-\enorm{\nabla f_{\wse}(\widehat{\xse})}\geq -\sqrt{2\beta\epsilon}$. Plugging this back results with the following
\begin{align*}
    \enorm{\x_{k+1}-\widehat{\xse}}^2 & \leq \enorm{\x_k-\widehat{\xse}}^2 -  \frac{\enorm{\x_k - \widehat{\xse}}_{ \widehat{\xse},\wse}^2}{4\beta \brac{1+\widehat{M_{\mathrm{f}}} \enorm{\x_k-\widehat{\xse}}_{\widehat{\xse}, \wse}}} + \frac{ \epsilon}{2\beta} + \sqrt{2 \beta \epsilon} \enorm{\x_k - \widehat{\xse}}
\end{align*}
Note that $\lambda_{\min} \brac{\nabla^2 f_{\wse}(\widehat{\xse})} \enorm{\widehat{\xse} - \x_k}^2 \leq \enorm{\widehat{\xse} - \x_k}_{\widehat{\xse},\wse}^2 \leq \lambda_{\max} \brac{\nabla^2 f_{\wse}(\widehat{\xse})} \enorm{\widehat{\xse} - \x_k}^2$. Since $\lambda_{\max} \brac{\nabla^2 f_{\wse}(\widehat{\xse})} \leq \beta$ (smoothness), and since $w_{\min}= \mulocaleps /(8m\beta)$, then $\widehat{M_{\mathrm{f}}} \leq 3\sqrt{m\beta}M_{\mathrm{f}} /\sqrt{\mulocaleps}$. Thus, it holds that
\begin{align*}
    \enorm{\x_{k+1}-\widehat{\xse}}^2 &\leq \enorm{\x_k-\widehat{\xse}}^2 -  \frac{1}{4 \beta} \frac{\lambda_{\min}\brac{\nabla^2 f_{\wse}(\widehat{\xse})} \enorm{\widehat{\xse} - \x_k}^2}{1+ \frac{3\sqrt{m} M_{\mathrm{f}} \beta}{\sqrt{\mulocaleps}}  \enorm{\x_k - \widehat{\xse}}} + \frac{ \epsilon}{2\beta} + \sqrt{2 \beta \epsilon} \enorm{\x_k - \widehat{\xse}}
\end{align*}
Using the assumption that $\max\limits_{\w\in \Delta_m}\lambda_{\min}\brac{\nabla^2 f_{\w}(\widehat{\xse})} = \mulocaleps$, Lemma~\ref{lemma:wmin_minimal_eigenvalue}, we have that
\begin{align*}
    \lambda_{\min} \brac{\nabla^2f_{\wse}(\widehat{\xse})}\geq \max\limits_{\w\in \Delta_m}\lambda_{\min}\brac{\nabla^2 f_{\w}(\widehat{\xse})} - 2m\wmin\beta \geq \mulocaleps - \mulocaleps/4 = (3/4)\mulocaleps.
\end{align*}
Finally, we obtain the recurring equation we wish:
\begin{align*}
    \enorm{\x_{k+1}-\widehat{\xse}}^2 &\leq \enorm{\x_k-\widehat{\xse}}^2 -  \frac{3\mulocaleps}{16 \beta} \frac{ \enorm{\widehat{\xse} - \x_k}^2}{1+ \frac{3\sqrt{m} M_{\mathrm{f}} \beta}{\sqrt{\mulocaleps}}  \enorm{\x_k - \widehat{\xse}}} + \frac{ \epsilon}{2\beta} + \sqrt{2 \beta \epsilon} \enorm{\x_k - \widehat{\xse}}
\end{align*}
Denote $\alpha_1 = \frac{3\mulocaleps}{16\beta}$, $\alpha_2 = \frac{3\sqrt{m}\beta M_{\mathrm{f}}}{\sqrt{\mulocaleps }}$, $\alpha_3 = \frac{ \epsilon}{2\beta} $, and $\alpha_4 = \sqrt{2 \beta \epsilon}$. Note that $\alpha_1 \in (0,1)$, $\alpha_2 \in \reals_+$, $\alpha_3 \leq \frac{\alpha_1^2}{256\alpha^2_2},$ and $\alpha_4\leq \frac{\alpha_1}{4\alpha_2}.$ and $\enorm{\x-\widehat{\xse}} \in \reals_+$ for every $\x \in \reals^n$, we arrive to the recurrence relation analyzed in Lemma~\ref{lemma:epsilon_app_iteration_bound}. Then, we obtain
\begin{align*}
    \enorm{\x_{k} - \widehat{\xse}} \leq 
    \begin{cases}
        \enorm{\x_{k_0} - \widehat{\xse}} \brac{1-\frac{3\mulocaleps}{32\beta}}^\frac{k-k_0}{2} + \sqrt{\frac{16\epsilon}{3\mulocaleps}+\frac{32 \sqrt{2\beta \epsilon}}{9 \sqrt{\mulocal m} M_{\mathrm{f}}}}  & k\geq k_0\\
        \enorm{\x_0 - \widehat{\xse}}- k \frac{\mulocal^{3/2}}{64 \beta^2 \sqrt{m} M_{\mathrm{f}}} & o.w.
    \end{cases}
\end{align*}
where $k_0 := \left\lceil \frac{4^4  \beta  \brac{\enorm{\x_0 - \widehat{\xse}} \frac{3\sqrt{m}\beta M_{\mathrm{f}}}{\sqrt{\mulocaleps }} -1} }{ 3\mulocaleps}\right\rceil$.
\end{proof}

\section{Missing Proofs}\label{app:missing_proofs}

\subsection{Proof of Proposition ~\ref{prop:unique_optimal_sol}}\label{app:missing_proofs_unique_opt_sol}
Let us restate the claim:

\UniqueSol*

\begin{proof}
Let $\x_\star$ be a minimizer of all functions $\{ f_i\}_{i\in[m]}$ which exists due to the AMOO assumption, namely, a solution of 
\begin{align}
    \x_\star \in \argmin_{\x} f_i(\x)\ \forall i\in [m]. \label{eq:app_amoo_assumption}
\end{align}
 
By assumption, it holds that for the weight vector $\w_\star\in \argmax_{\w\in \Delta_m}\lambda_{\min}\left(\sum_{i=1}^m w_i \nabla^2 f_i(\x_\star) \right)$ it holds that $\lambda_{\min}\left(\sum_{i=1}^m \nabla^2 f_{\w_\star}(\x_\star) \right)>0$, namely,
\begin{align}
    \nabla^2 f_{\w_\star}(\x_\star) \succ 0. \label{eq:f_w_star_stictly_pos}
\end{align}
Notice that $\x_\star$ is a minimizer of $f_{\w_\star}$ (Lemma~\ref{lemma:optimality_of_x_star}), and that $f_{\w_\star}$ is a convex function, since $f_i$ are convex and $\w_\star$ has non-negative components. Combining with Eq.~\eqref{eq:f_w_star_stictly_pos}, it implies that $\x_\star$ is a unique minimizer of $f_{\w_\star}$.

Assume, by way of contradiction, there exists an additional minimizer that solves Eq.~\eqref{eq:app_amoo_assumption}, denote by $\widehat{\x}_\star.$ Since it is a solution of Eq.~\eqref{eq:app_amoo_assumption}, it is also a minimizer of $f_{\w_\star}$. This contradicts the fact $f_{\w_\star}$ has a unique optimal solution $\x_\star.$ 

\end{proof}

\subsection{Proof of Proposition~\ref{thm:app_hessian}}\label{app:weyls consequence}
The proof of Proposition~\ref{thm:app_hessian} is a corollary of Theorem~\ref{thm:weyls} (Weyl's Theorem). We establish the result for a general deviation in Hessian matrices without requiring it to be necessarily diagonal.

Let us restate the result:
\ApproxHessian*

\begin{proof}
    
Denote $\A_i := \nabla^2 f_i(\x) + \Delta_i$ for every $i\in [m]$, and $\sum_i^m \Delta_i = \Delta$. Let $\w_\star $, and $\hat{\w}_\star$ denote the solution of,
    \begin{align*}
        \w_\star \in \argmax_{\w\in \Delta} \lambda_{\min}\brac{\sum_i w_i \nabla^2 f_i(\x)}, ~~~~~ \text{and} ~~~~~~~ \hat{\w}_\star \in \argmax_{\w\in \Delta} \lambda_{\min}\brac{\sum_i w_i \A_i},
    \end{align*}
    respectively. Let $g(\w_\star)$, and $\hat{g}(\hat{\w}_\star)$ denote the optimal value, $g(\w_\star) = \lambda_{\min} \brac{\sum_i w_{\star, i}  \nabla^2 f_i(\x)}$, and $\hat{g}(\hat{\w}_\star) = \lambda_{\min} \brac{\sum_i \hat{w}_{\star, i} \A_i}$. Then, since $\hat{g}(\w_\star) -\hat{g}(\hat{\w}_\star)\leq 0$ by the optimality of $\hat{\w}_\star$ on $\hat{g}$, the following holds
    \begin{align*}
        g(\w_\star) & = g(\w_\star) - \hat{g}(\w_\star) + \hat{g}(\w_\star) -\hat{g}(\hat{\w}_\star) +\hat{g}(\hat{\w}_\star) - g(\hat{\w}_\star) +g(\hat{\w}_\star) \\
        & \leq |g(\w_\star) - \hat{g}(\w_\star)|  + |\hat{g}(\hat{\w}_\star) - g(\hat{\w}_\star)| +g(\hat{\w}_\star)\\
    \end{align*}
    Using Theorem~\ref{thm:weyls} (Weyl's Theorem) the followings are hold: $|g(\w_\star) - \hat{g}(\w_\star)| \leq \Delta$, and $|\hat{g}(\hat{\w}_\star) - g(\hat{\w}_\star)| \leq \Delta$. Then, we obtain 
    \begin{align*}
        g(\w_\star) & \leq 2\enorm{\Delta} +g(\hat{\w}_\star)
    \end{align*}
    Finally, since $g(\w_\star)\geq \muglobal$, by Definition~\ref{def:mu_global}, we obtain the proof.
\end{proof}

\subsection{Proof of Lemma~\ref{lemma:exact_iteration_bound}}\label{app:exact_iteration_bound_proof}
Let us restate the claim:

\ExactIterationBound*

\begin{proof}
We split the proof into two regimes, the incremental convergence and linear convergence regime.

\paragraph{Incremental convergence, $r_k> 1/\alpha_2$.} With this assumption we have $1+r_k \alpha_2 < 2 r_k \alpha_2$. Then, the following bound holds:
\begin{align*}
     r_{k+1} &\leq r_k\sqrt{1-\frac{\alpha_1}{2\alpha_2 r_k}}.
\end{align*}
Recall that $\sqrt{1-y}\leq 1-\frac{y}{2}$ for every $y \leq 1$. Since $ \frac{1}{\alpha_2 r_k}<  1$ we have $ \frac{\alpha_1}{2\alpha_2 r_k}<  \frac{\alpha_1}{2} < 1$.
Hence,
\begin{align*}
    r_{k+1} \leq r_k\brac{1-\frac{\alpha_1}{4\alpha_2 r_k}} = r_k - \frac{\alpha_1}{4\alpha_2}.
\end{align*}
For every $k'<k$ the recursive equation is still in the incremental convergence regime. Thus, for every $k'<k$ holds that $r_{k'}> 1/\alpha_2$.

By solving $1/\alpha_2 \geq r_0 - k_0\alpha_1/4\alpha_2$ we conclude the maximal iteration after which $r_k\leq 1/\alpha_2$, namely, after at most $k_0$ iterates $r_k$ out from the incremental convergence regime.

\paragraph{Linear convergence, $r_k\leq 1/\alpha_2$.} With this assumption we have the following bound on the recursive equation:
 \begin{align*}
     r_{k+1}^2 \leq \brac{1-\frac{\alpha_1}{2}}r_{k}^2.
 \end{align*}
Further, since for every $k'\geq k$ it holds that $r_{k'}\leq r_{k} \leq 1/\alpha_2 $ the recursive equation continues in the linear convergence regime. Thus, after at most $k_0$ iterations $r_k$ is in the linear convergence regime.

\end{proof}

\subsection{Proof of Lemma~\ref{lemma:epsilon_app_iteration_bound}}
Let us restate the claim:
\EpsilonIterationBound*
\begin{proof}
We split the proof into two regimes, the incremental convergence and linear convergence regime. 

 \paragraph{Incremental convergence, $r_k> 1/\alpha_2$.} With this assumption we have $1+r_k \alpha_2 < 2 r_k \alpha_2$. Then, the following bound holds:
 \begin{align*}
     r_{k+1} &\leq \sqrt{ r_k^2 -\frac{\alpha_1r_k}{2\alpha_2} + \alpha_4 r_k+\alpha_3}  = \sqrt{r_k^2 \brac{1 -\frac{\alpha_1- 2\alpha_2\alpha_4}{2\alpha_2 r_k}} +\alpha_3} \leq r_k\sqrt{1-\frac{\alpha_1}{4\alpha_2 r_k}} + \sqrt{\alpha_3}.
 \end{align*}
 The third relation holds since $\alpha_4\leq \frac{\alpha_1}{4\alpha_2}$ by assumption which implies $\alpha_1/2\geq 2 \alpha_2 \alpha_4$ and by $\sqrt{a+b}\leq \sqrt{a}+\sqrt{b}$ for $a,b\geq 0$. 
 
 Recall that $\sqrt{1-y}\leq 1-y/2$ for every $y \leq 1$. Since $ \frac{1}{\alpha_2 r_k}<  1$ we have $ \frac{\alpha_1}{2\alpha_2 r_k}<  \frac{\alpha_1}{2} < 1$. 
 Hence,
\begin{align*}
    r_{k+1} \leq r_k\brac{1-\frac{\alpha_1}{8\alpha_2 r_k}} +\sqrt{\alpha_3}= r_k - \frac{\alpha_1}{8\alpha_2} +\sqrt{\alpha_3} \leq r_k - \frac{\alpha_1}{16\alpha_2},
\end{align*}
since $\alpha_3 \leq \frac{\alpha_1^2}{256\alpha^2_2}= \frac{\alpha_1^2}{16^2\alpha^2_2} $ by assumption. For every $k'<k$ the recursive equation is still in the incremental convergence regime. Thus, for every $k'<k$ holds that $r_{k'}> 1/\alpha_2$.

By solving $1/\alpha_2 \geq r_0 - k_0\alpha_1/16\alpha_2$ we conclude the maximal iteration after which $r_k\leq 1/\alpha_2$, namely, after at most $k_0$ iterates $r_k$ out from the incremental convergence regime.

\paragraph{Linear convergence, $r_k\leq 1/\alpha_2$.} With this assumption we have the following bound on the recursive equation:
 \begin{align}
     r_{k+1}^2 \leq \brac{1-\frac{\alpha_1}{2}}r_{k}^2 +\alpha_3 +\alpha_4 /\alpha_2 = \brac{1-\frac{\alpha_1}{2}}r_{k}^2 +\alpha', \label{eq:recursion bound induction}
 \end{align}
 where $\alpha':=\alpha_3 +\alpha_4 /\alpha_2.$
We will first show that $r_{k'}\leq 1/\alpha_2$ for all $k'\geq k.$ Observe that 
\begin{align*}
    r_{k+1}^2\leq \brac{1-\frac{\alpha_1}{2}}\frac{1}{\alpha_2^2} +\alpha' \leq \frac{1}{\alpha_2^2}
\end{align*}
since $ \alpha' = \alpha_3 +\frac{\alpha_4}{\alpha_2} \leq \frac{\alpha_1}{2\alpha^2_2}$ by assumption and by the fact $\alpha_1 \in (0,2)$. Hence, $r_{k+1}\leq \frac{1}{\alpha_2}$ which inductively implies that  $r_{k'}\leq r_k \leq 1/\alpha_2$ for all $k'\geq k \geq k_0$.

Since in this regime, for all $k'\geq k$ Eq.~\eqref{eq:recursion bound induction} holds, we can upper bound the recursive relation by
\begin{align*}
    r_{k'}^2 &\leq \brac{1-\frac{\alpha_1}{2}}^{k'-k} 
    r_k^2 + \sum_{t=0}^\infty \brac{1-\frac{\alpha_1}{2}}^t \alpha' \leq r_{k_0}^2 \brac{1-\frac{\alpha_1}{2}}^{k'-k} + \frac{2\alpha'}{\alpha_1}
\end{align*}
where the last inequality holds by summing the geometric series and since $1-\alpha_1/2\in (0,1).$ This inequality implies the result
since
\begin{align*}
    r_{k'}\leq r_{k_0} \brac{1-\frac{\alpha_1}{2}}^{\frac{k'-k}{2}} + \sqrt{\frac{2\alpha'}{\alpha_1}} = r_{k_0} \brac{1-\frac{\alpha_1}{2}}^{\frac{k'-k}{2}} + \sqrt{\frac{2\alpha_3}{\alpha_1}+\frac{2\alpha_4}{\alpha_1 \alpha_2}}
\end{align*}
by $\sqrt{a+b}\leq \sqrt{a}+\sqrt{b}$ for $a,b\geq 0$. Finally, since for every $k'\geq k$ it holds that $r_{k'}\leq r_{k} \leq 1/\alpha_2 $ the recursive equation continues in the linear convergence regime. Thus, after at most $k_0$ iterations $r_k$ is in the linear convergence regime.

\end{proof}

\subsection{Proof of Lemma~\ref{lemma:wmin_minimal_eigenvalue}}
Let us restate the claim:
\ContOfMinEigenOfHessian*

\begin{proof}
        To establish the lemma we want to show that for any $\w\in \Delta_{m}$ there exists $\widehat{\w}\in \Delta_{m,\wmin}$ such that $\lambda_{\min} \brac{\sum_{i} \hat{w}_i \nabla^2 f_i(\x)}\geq \lambda_{\min} \brac{\sum_{i} w_i \nabla^2 f_i(\x)}-2m\wmin\beta$. We start by bounding the following term $\enorm{\nabla^2 f_\w(\x) - \nabla^2 f_{\hat{\w}}(\x)}_2$ for any $\x\in \reals^n$. By the triangle inequality and the positive homogeneity, we have 
    \begin{align*}
       &\enorm{\sum_i(w_i-\hat{w}_i)\nabla^2 f_i(\x)}_2 \leq  \sum_i \enorm{(w_i-\hat{w}_i) \nabla^2 f_i(\x)}_2 = \sum_i |w_i-\hat{w_i}|\enorm{ \nabla^2 f_i(\x)}_2  \leq \beta \sum_i |w_i-\hat{w_i}|,
    \end{align*}
    while the last inequality holds since $\{f_i\}_{i\in[m]}$ are $\beta$ smooth. Since for any $\w\in \Delta_m$ there exist $\hat{\w}\in \Delta_{m,\wmin}$ such that $\sum_i |w_i-\hat{w_i}| \leq 2m\wmin$, we obtain that for every $\x\in\reals^n$ it holds that 
    \begin{align*}
       &\enorm{\nabla^2 f_\w(\x) - \nabla^2 f_{\hat{\w}}(\x)}_2 = \enorm{\sum_i(w_i-\hat{w}_i)\nabla^2 f_i(\x)}_2 \leq  2m\wmin\beta.
    \end{align*}
    Thus, by using Theorem~\ref{thm:weyls} we obtain that for any $\w\in \Delta_m$
    \begin{align*}
         | \lambda_{\min} (\nabla^2 f_\w(\x)) - \lambda_{\min} (\nabla^2 f_{\hat{\w}}(\x)) | \leq \enorm{\nabla^2 f_\w(\x) - \nabla^2 f_{\hat{\w}}(\x)}_2 \leq  2m\wmin\beta.
    \end{align*}
    By setting $\w$ as $\w_\star$ we conclude the result.
\end{proof}

\end{document}